\newcommand{\supplemental}{Appendix}
\newcommand{\supplemental}{Supplemental Materials}
\newtheorem{theorem}{Theorem}[section]
\newtheorem{lemma}[theorem]{Lemma}
\newtheorem{proposition}[theorem]{Proposition}
\newtheorem{corollary}[theorem]{Corollary}
\theoremstyle{definition}
\newtheorem{definition}[theorem]{Definition}
\newtheorem{example}[theorem]{Example}
\newcommand{\Ball}[3]{\mathtt{B}_{#2}(#3,#1)} 
\newcommand{\KSR}[3]{\widehat{m}(#1\| #2,#3)} 
\newcommand{\KSC}[3]{\widehat{u}(#1\| #2,#3)} 
\newcommand{\PC}[3]{\widehat{h}(#1\| #2,#3)} 
\newcommand{\RHK}{K^{\mathcal{H}}} 
\newcommand{\RPK}{K^{\mathtt{part}}_{\mathfrak{P},\alpha}} 
\newcommand{\wrp}{{\mathsf{rp}}}
\newcommand{\rha}{{\mathsf{ha}}}
\newcommand{\WRPK}{K^{\mathsf{rp}}_M}
\newcommand{\RHAK}{K^{\mathsf{ha}}_{\mathbb{S}^d}}
\newcommand{\fn}{\mathtt{Fn}}
\newcommand{\vf}{\mathtt{Vf}}
\newcommand{\dist}{\mathtt{dist}}
\newcommand{\len}{\mathtt{len}}
\newcommand{\pd}[2]{\partial_{#1}|_{#2}}
\title{Consistent Interpolating Ensembles\\via the Manifold-Hilbert Kernel}
  \author{
    Yutong Wang, Clayton Scott \\
    Electrical Engineering and Computer Science \\
    University of Michigan \\
    Ann Arbor, MI 48109\\
    \texttt{\{yutongw, clayscot\}@umich.edu} \\
  }
  \author{
    Yutong Wang \\
    University of Michigan\\
    \texttt{yutongw@umich.edu} \\
    \And
    Clayton D. Scott\\
    University of Michigan\\
    \texttt{clayscot@umich.edu}
  }
\begin{document}

\maketitle

\begin{abstract}
  Recent research in the theory of overparametrized learning has sought to establish generalization guarantees in the interpolating regime. Such results have been established for a few common classes of methods, but so far not for ensemble methods.
  We devise an ensemble classification method that simultaneously interpolates the training data, and is consistent for a broad class of data distributions.
  To this end, we define the \emph{manifold-Hilbert kernel} for data distributed on a Riemannian manifold.
  We prove that kernel smoothing regression and classification using the manifold-Hilbert kernel are weakly consistent in the setting of \textcite{devroye1998hilbert}.
  For the sphere, we show that the manifold-Hilbert kernel can be realized as a {weighted} random partition kernel, 
  which arises as an infinite ensemble of partition-based classifiers.
\end{abstract}

\section{Introduction}

Ensemble methods are among the most often applied learning algorithms, yet their theoretical properties have not been fully understood \parencite{biau2016random}.
Based on empirical evidence, \textcite{wyner2017explaining} conjectured that {interpolation of the training data} plays a key role in explaining the success of AdaBoost and random forests. However, while a few classes of learning methods have been analyzed in the interpolating regime \cite{belkin2019reconciling,bartlett2020benign}, ensembles have not.

Towards developing the theory of interpolating ensembles, we examine an ensemble classification method for data distributed on the sphere, and show that this classifier interpolates the training data and is consistent for a broad class of data distributions. To show this result, we develop two additional contributions that may be of independent interest. First, for data distributed on a Riemannian manifold $M$, we introduce the \emph{manifold-Hilbert kernel} $\RHK_M$, a manifold extension of the \emph{Hilbert kernel} \cite{shepard1968two}. Under the same setting as \textcite{devroye1998hilbert}, we prove that kernel smoothing regression  with $\RHK_M$ is weakly consistent while interpolating the training data. Consequently, the classifier obtained by taking the sign of the kernel smoothing estimate has zero training error and is consistent.

Second, we introduce a class of kernels called weighted random partition kernels. These are kernels that can be realized as an infinite, weighted ensemble of partition-based histogram classifiers. Our main result is established by showing that when $M = \mathbb{S}^d$, the $d$-dimensional sphere, the manifold-Hilbert kernel is a weighted random partition kernel. In particular, we show that on the sphere, the manifold-Hilbert kernel is a weighted ensemble based on random hyperplane arrangements.
This implies that the kernel smoothing classifier is a consistent, interpolating ensemble on $\mathbb{S}^d$. To our knowledge, this is the first demonstration of an interpolating ensemble method that is consistent for a broad class of distributions in arbitrary dimensions.

\subsection{Problem statement}
Consider the problem of binary classification on a Riemannian manifold $M$.
Let $(X,Y)$ be random variables jointly distributed on $M \times \{\pm 1\}$. Let $D^n:=\{(X_i,Y_i)\}_{i=1}^n$ be the (random) training data consisting of $n$ i.i.d copies of $X,Y$.
A \emph{classifier}, i.e., a mapping from $D^n$ to a function  $\widehat{f}(\bullet \| D^n): M \to \{ \pm 1\}$, has
the \textbf{interpolating-consistent property} if,
when $X$ has a continuous distribution,
both of the following hold:
1) 
$
\widehat{f}(X_i \| D^n) = Y_i,\, \mbox{for all $i \in \{1,\dots, n\}$}
$,
and
2) 
\begin{equation}
  \label{equation:consistency}
    \Pr\{
    \widehat{f}(X\| D^n)  \ne Y\}
    \to
  \inf_{{f: M \to \{\pm 1\}\,\,\mathrm{measurable}}}
    \Pr\{
    f(X)  \ne Y\}
    \quad \mbox{in probability as $n \to \infty$.}
\end{equation}
Our goal is to find an interpolating-consistent ensemble of \emph{histogram classifiers}, to be defined below.

A \emph{partition} on $M$, denoted by $\mathcal{P}$, is a set of subsets of $M$ such that $P \cap P' = \emptyset$ for all $P, P' \in \mathcal{P}$ and 
$M = \bigcup_{P \in \mathcal{P}} P$.
Given $x \in M$, let $\mathcal{P}[x]$ denote the unique element $P \in \mathcal{P}$ such that $x \in P$.
The set of all partitions on a space $M$ is denoted $\mathtt{Part}(M)$.
The \emph{histogram classifier} with respect to $D^n$ over $\mathcal{P}$ is the sign of the function $\PC{\bullet}{D^n}{\mathcal{P}} : M \to \mathbb{R}$ given by
\begin{equation}
  \label{equation:histogram-classifier}
  \PC{x}{D^n}{\mathcal{P}} : = \sum_{i = 1}^n Y_i \cdot \mathbb{I}\{x \in \mathcal{P}[X_i]\},
\end{equation}
where $\mathbb{I}$ is the indicator function.
\begin{definition}
  \label{definition:RP}
  A \emph{weighted random partition} (WRP)  over $M$ is a 3-tuple $(\Theta, \mathfrak{P}, \alpha)$
  consisting of
  (i)
\emph{parameter space of partitions}:
a set $\Theta$
where $\mathcal{P}_\theta \in \mathtt{Part}(M)$ for each $\theta \in \Theta$, 
(ii)
\emph{random partitions}:
 a probability measure $\mathfrak{P}$ on $\Theta$, and 
 (iii)
\emph{weights}:
a nonnegative function
$\alpha : \Theta \to \mathbb{R}_{\ge 0}$.
\end{definition}
\begin{example}[Regular partition of the $d$-cube]
Let $M = [0,1]^d$ and $\Theta = \{1,2\dots\} =: \mathbb{N}_+$.
For each $n \in  \mathbb{N}_+$, denote by $\mathcal{P}_n$ the regular partition of $M$ into $n^d$ $d$-cubes of side length $1/n$. 
For any probability mass function $\mathfrak{P}$  on $\mathbb{N}_+$ and weights  $\alpha : \mathbb{N}_+ \to \mathbb{R}_{\ge 0}$, the 3-tuple $(\Theta, \mathfrak{P}, \alpha)$ is a WRP.
\end{example}

Below, WRPs will be denoted with 2-letter names in the sans-serif font,
e.g., ``$\wrp$'' for a generic WRP,
and ``$\rha$'' for the weighted hyperplane arrangement random partition (Definition~\ref{definition:random-HA-partition}).
The \emph{weighted random partition kernel} associated to 
$\wrp = (\Theta, \mathfrak{P}, \alpha)$
is
defined as
  \begin{equation}
    \label{equation:RPK}
    \WRPK :  M \times M \to \mathbb{R}_{\ge 0} \cup \{ \infty\},
    \quad
  \WRPK(x,z) := \mathbb{E}_{\theta \sim \mathfrak{P}} [
   \alpha(\theta) \mathbb{I}\{ x \in \mathcal{P}_\theta[z]\}
  ].
\end{equation} 
When $\alpha \equiv 1$, we recover the notion of unweighted random partition kernel introduced in \cite{davies2014random}.
Note that the kernel is symmetric since $\mathbb{I}\{x \in \mathcal{P}_\theta[z]\} = \mathbb{I}\{z \in \mathcal{P}_{\theta}[x]\}$. If $\WRPK < \infty$, then $\WRPK$ is a positive definite (PD) kernel. When $\WRPK$ can evaluate to $\infty$, the definition of a PD kernel is not applicable since the positive definite property is defined only for to kernels taking finite values \cite{berlinet2011reproducing}.

Let $\mathtt{sgn} : \mathbb{R} \cup \{\pm \infty\} \to \{\pm 1\}$ be the sign function.
For a WRP, define
the weighted infinite-ensemble 
\begin{equation}
  \label{equation:definition-kernel-smoothing-classification}
  \KSC{x}{D^n}{\WRPK} := 
\sum_{i=1}^n Y_i \cdot \WRPK(x,X_i)
=
  \mathbb{E}_{\theta \sim \mathfrak{P}}[
  \alpha(\theta)
  \PC{x}{D^n}{\mathcal{P}_\theta}].
\end{equation}
Note that the equality on the right follows immediately from linearity of the expectation and the definition of 
  $\PC{\bullet}{D^n}{\mathcal{P}_\theta}]$ in 
  Equation~\eqref{equation:histogram-classifier}.

\textbf{Main problem}. 
Find a WRP such that $\mathtt{sgn} (\KSC{\bullet}{D^n}{\WRPK})$ has the interpolating-consistent property.

\subsection{Outline of approach and contributions}
In the regression setting, we have $(X,Y)$ jointly distributed on $M \times \mathbb{R}$. Let $m(x) := \mathbb{E}[Y | X =x ]$.
Recall from \textcite[Equation (7)]{belkin2019does} the definition of the \emph{kernel smoothing estimator}
with a so-called \emph{singular}\footnote{The ``singular'' modifier refers to the fact that $K(x,x) = +\infty$ for all $x \in M$.} kernel $K : M \times M \to [0,+\infty]$:
\begin{equation}
  \label{equation:definition-kernel-smoothing-regression}
  \KSR{x}{D^n}{K}
  := 
  \begin{cases}
    Y_i &: \exists i \in [n] \mbox{ such that } x = X_i \\
  \frac{\sum_{i = 1}^n Y_i K(x,X_i)}{\sum_{j=1}^n K(x,X_j)}
  &:\sum_{j=1}^n K(x,X_j) > 0
  \\
    0&: \mbox{otherwise.}
  \end{cases}
\end{equation}
We note that Equation~\eqref{equation:definition-kernel-smoothing-regression} is referred as the \emph{Nadaraya-Watson} estimate in \cite{belkin2019does}.
Now, we simply write $\widehat{m}_n(x)$ instead of $\KSR{x}{D^n}{K}$ when there is no ambiguity. Similarly, we write $\widehat{u}_n(x)$ instead of $\KSC{x}{D^n}{K}$ from earlier.
Note that 
$\mathtt{sgn}(\widehat{m}_n(x)) = \mathtt{sgn}(\widehat{u}_n(x))$ if $\sum_{j=1}^n K(x,X_j) >0$.

Observe that $\widehat{m}_n$ is interpolating by construction.
Let $\mu_X$ denote the marginal distribution of $X$.
The \emph{$L_1$-error} of $\widehat{m}_n$ in approximating $m$ is 
$
  J_n := \int_{M} |\widehat{m}_n(x) - m(x)| \mu_X(dx)
$.
For $M = \mathbb{R}^d$ and the \emph{Hilbert kernel} defined by $K^{\mathcal{H}}_{\mathbb{R}^d}(x,z) := \|x-z\|^{-d}$,
\textcite{devroye1998hilbert} proved  $L_1$-consistency for \emph{regression}: $J_n \to 0$ in probability
when {$Y$ is bounded and $X$ is continuously distributed.

  \textbf{Our contributions.} 
Our primary contribution is to demonstrate an ensemble method with the consistent-interpolating property. Toward this end, in Section~\ref{section:RH-kernel}, we introduce the manifold-Hilbert kernel $\RHK_M$ on a Riemannian manifold $M$.
When show that when $M$ is complete, connected, and smooth, kernel smoothing regression with $\RHK_M$ has the same consistency guarantee
(Theorem~\ref{theorem:main-theorem})
as $\RHK_{\mathbb{R}^d}$ mentioned in the preceding paragraph.
In Section~\ref{section:hawrp}, we consider the case when $M = \mathbb{S}^d$, and show that the manifold-Hilbert kernel $\RHK_{\mathbb{S}^d}$ is a weighted random partition kernel 
(Proposition~\ref{theorem:spherical-hilbert-kernel}).

\textcite[Section 7]{devroye1998hilbert} observed that the $L_1$-consistency of $\widehat{m}_n$ for regression implies the consistency for classification of $\mathtt{sgn} \circ \widehat{u}_n$.
Furthermore, $\widehat{m}_n$ is interpolating for regression implies that $\mathtt{sgn} \circ \widehat{u}_n$ is interpolating for classification.
These observations together with our results demonstrate the existence of a weighted infinite-ensemble classifier with the interpolating-consistent property.



\subsection{Related work}

\textbf{Kernel regression.}
Kernel smoothing regression, or simply kernel regression, is an interpolator when the kernel used is singular, a fact known to \textcite{shepard1968two} in \citeyear{shepard1968two}.
\textcite{devroye1998hilbert} showed that kernel regression with the Hilbert kernel is interpolating and weakly consistent for data with a density and bounded labels.
Using singular kernels with compact support,
\textcite{belkin2019does} showed that minimax optimality can be achieved  under additional distributional assumptions.


\textbf{Random forests.}
\textcite{wyner2017explaining} proposed that interpolation may be a key mechanism for the success of random forests and gave a compelling intuitive rationale.
\textcite{belkin2019reconciling} studied empirically the double descent phenomenon in random forests by considering the generalization performance past the interpolation threshold.
The PERT variant of random forests, introduced by \textcite{cutler2001pert}, provably interpolates in 1-dimension.
\textcite{belkin2018overfitting} pose as an interesting question whether the result of \textcite{cutler2001pert} extends to higher dimension.
Many work have established consistency of  random forest and its variants under different settings \cite{breiman2004consistency,biau2008consistency,scornet2015consistency}. However, none of these work addressed interpolation.

\textbf{Boosting.} For classification under the noiseless setting (i.e., the Bayes error is zero), AdaBoost is interpolating and consistent (see \textcite[first paragraph of Chapter 12]{freund2012boosting}).
However, this setting is too restrictive and the result does not answer if consistency is possible when fitting the noise.
\textcite{bartlett2007adaboost} proved that AdaBoost with early stopping is universally consistent, however without the interpolation guarantee.
To the best of our knowledge, whether AdaBoost or any other variant of boosting can be interpolating and consistent remains open.

\textbf{Random partition kernels.}
\textcite{breiman2000ensemble,geurts2006extremely} studied infinite ensembles of simplified variants of random forest and connections to certain kernels.
\textcite{davies2014random} formalized this connection and coined the term \emph{random partition kernel}.
\textcite{scornet2016random} further developed the theory of random forest kernels and obtained upper bounds on the rate of convergence.
However, it is not clear if these variants of random forests are interpolating.

Previously defined (unweighted) random partition kernels are bounded, and thus cannot be singular.
On the other hand, the manifold-Hilbert kernel is always singular.
To bridge between ensemble methods and theory on interpolating kernel smoothing regression, we propose \emph{weighted} random partitions (Definition~\ref{definition:RP}), whose associated kernel (Equation~\ref{equation:RPK}) can be singular.

\textbf{Learning on Riemannian manifolds.} 
Strong consistency of a kernel-based classification method on manifolds has been established by \textcite{loubes2008kernel}. However, the result requires the kernel to be bounded and thus the method is not guaranteed to be interpolating.
See \textcite{feragen2016open} for a review of theoretical results regarding kernels on Riemannian manifolds.

Beyond kernel methods, other classical methods for Euclidean data have been extended to Riemannian manifolds, e.g.,
regression \cite{thomas2013geodesic},
classification \cite{yao2020principal}, and 
dimensionality reduction and clustering \cite{zhang2004principal}\cite{mardia2022principal}.
To the best of our knowledge, no previous works have demonstrated an interpolating-consistent classifiers on manifolds other than $\mathbb{R}^d$.

In many applications, the data naturally belong to a Riemannian manifold.
Spherical data arise from a range of disciplines in natural sciences. See
     the influential textbook by \textcite[Ch.1\S4]{mardia2000directional}.
For applications of the Grassmanian manifold in computer vision, see \textcite{jayasumana2015kernel} and the references therein.
Topological data analysis \cite{wasserman2018topological} presents another interesting setting of manifold-valued data in the form of \emph{persistence diagrams} \cite{anirudh2016riemannian,le2018persistence}.




\section{Background on Riemannian Manifolds}

We give an intuitive overview of the necessary concepts and results on Riemannian manifolds. A longer, more precise version of this overview is in the \supplemental~Section~\ref{section:basics-of-riemannian-manifolds}.

A smooth $d$-dimensional manifold $M$ is a topological space that is locally diffeomorphic\footnote{A diffeomorphism is a smooth bijection whose inverse is also smooth.} to open subsets of $\mathbb{R}^d$.
For simplicity, suppose that $M$ is embedded in $\mathbb{R}^N$ for some $N \ge d$, e.g., $\mathbb{S}^d\subseteq \mathbb{R}^{d+1}$.
Let $x \in M$ be a point.
The \emph{tangent space} at $x$, denoted $T_xM$, is the set of vectors that is tangent to $M$ at $x$.
Since linear combinations of tangent vectors are also tangent, the tangent space $T_xM$ is a vector space.
Tangent vectors can also be viewed as the time derivative of smooth curves. In particular, let $x \in M$. If $\epsilon > 0$ is an open set and $\gamma : (-\epsilon,\epsilon) \to M$ is a smooth curve such that $\gamma(0) = x$, then $\frac{d\gamma}{dt}(0) \in T_{x}M$.


A \emph{Riemannian metric} on $M$ is a choice of inner product $\langle \cdot, \cdot \rangle_x$ on $T_xM$ for each $x$ such that $\langle \cdot, \cdot \rangle_x$ varies smoothly with $x$.
Naturally, $\|z\|_x := \sqrt{\langle z, z \rangle_x}$ defines a norm on $T_xM$. The length of a 
piecewise smooth curve $\gamma : [a,b] \to M$ is defined by $\len(\gamma) := \int_a^b \|\dot{\gamma}(t)\|_{\gamma(t)} dt$.
Define $\dist_M(x,\xi) := \inf \{ \len(\gamma) : \gamma$ is a piecewise smooth curve from $x$ to $\xi\}$, which is a metric on $M$ in the sense of metric spaces
(see \textcite[Proposition 1.1]{sakai1996riemannian}).
For $x \in M$ and $r \in (0,\infty)$, the open metric ball centered at $x$ of radius $r$ is denoted  
$\Ball{M}{x}{r} := \{\xi \in M : \dist_{M}(x,\xi) < r\}$.

A curve $\gamma : [a,b] \to M$ is a geodesic if $\gamma$ is \emph{locally} distance minimizing and has constant speed, i.e., $\|\frac{d\gamma}{dt}(\tau)\|_{\gamma(\tau)}$ is constant. Now, suppose $x \in M$ and $v \in T_xM$ are such that there exists a geodesic $\gamma:[0,1] \to M$ where $\gamma(0) = x$ and $\frac{d\gamma}{dt}(0) = v$. Define $\exp_x(v) := \gamma(1)$, the element reached by traveling along $\gamma$ at time $=1$. See Figure~\ref{fig:exponential-map} for the case when $M = \mathbb{S}^2$.

\begin{figure}
    \centering
    \includegraphics[width=0.95\textwidth]{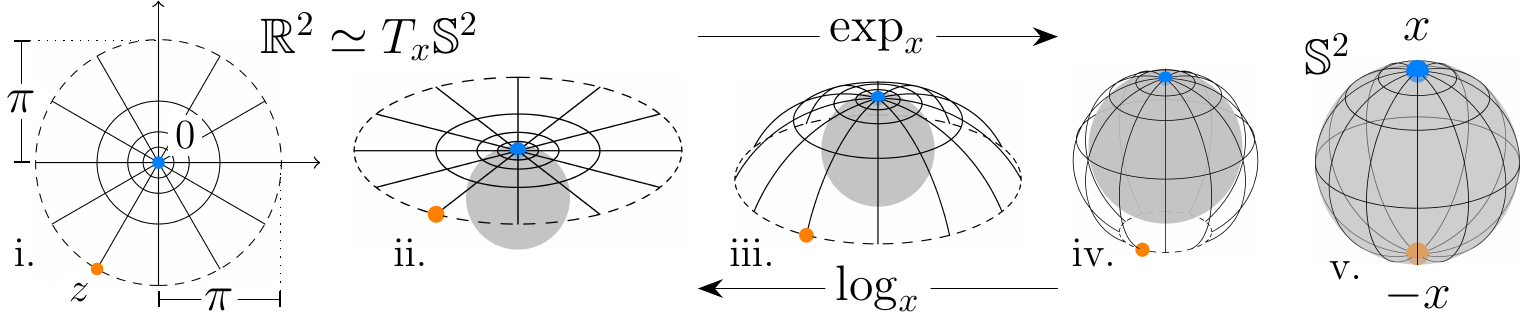}
    \caption{
    An illustration of the exponential map $\exp_x$ for the manifold $M = \mathbb{S}^2$, where $x$ is the ``northpole'' (blue) and $-x$ the ``southpole'' (orange).
The logarithm map $\log_x$, discussed in Section~\ref{section:riemannian-logarithm}, is a right-inverse to $\exp_x$, i.e., $\exp_x \circ \log_x$ is the identity.
    \emph{Panel~i}. The tangent space $T_x \mathbb{S}^2$ visualized as $\mathbb{R}^2$. The dashed circle encloses a disc of radius $\pi$.
    \emph{Panel~ii}. The tangent space realized as the hyperplane tangent to sphere at $x$.
    \emph{Panel~iii-v}. Animation showing $\exp_x$ as a bijection from the open disc of radius $\pi$ to $\mathbb{S}^2 \setminus \{-x\}$. The entire dashed circle in Panel i is mapped to $-x$ the southpole.
    Thus, $\log_x$ maps the southpole $-x$ to a point $z$ on the dashed circle.
    }
    \label{fig:exponential-map}
\end{figure}

For a fixed $x \in M$, the above function $\exp_x$, the \emph{exponential map},  can be defined on an open subset of $T_xM$ containing the origin.
The 
Hopf-Rinow theorem (\cite[Ch.\ 8, Theorem 2.8]{do1992riemannian})
states that if $M$ is connected and complete with respect to the metric $\mathtt{dist}_M$, then  $\exp_x$ can be defined on all of $T_xM$.

\section{The Manifold-Hilbert kernel}
\label{section:RH-kernel}
Throughout the remainder of this work, we assume that $M$ is a complete, connected, and smooth Riemannian manifold of dimension $d$.

\begin{definition}
  \label{definition:manifold-Hilbert-kernel}
We define the \emph{manifold-Hilbert kernel} 
  $\RHK_{M} : M \times M \to [0,\infty]$ 
  for each $x,\xi \in M$
  by $\RHK_{M}(x,\xi) := \dist_M(x,\xi)^{-d}$ if $x \ne \xi$ and 
$\RHK_{M}(x,x) := \infty$ otherwise.
\end{definition}

Let $\lambda_M$ be the \emph{Riemann–Lebesgue volume measure} of $M$.
Integration with respect to this measure is denoted $\int_M f d\lambda_M$ for a function $f : M \to \mathbb{R}$.
For details of the construction of $\lambda_M$, see \textcite[Proposition 1.5]{amann2009integration}.
When $M =\mathbb{R}^d$, $\lambda_M$ is the ordinary Lebesgue measure and $\int_{\mathbb{R}^d} f d \lambda_{\mathbb{R}^d}$ is the ordinary Lebesgue integral. For this case, we simply write $\lambda$ instead of $\lambda_{\mathbb{R}^d}$.

We now state our first main result, a manifold theory extension of \textcite[Theorem 1]{devroye1998hilbert}.
\begin{theorem}
  \label{theorem:main-theorem}
  Suppose that $X$ has a density $f_X$ with respect to $\lambda_M$ and that $Y$ is bounded.
  Let $P_{Y|X}$ be a conditional distribution of $Y$ given $X$ and $m_{Y|X}$ be its conditional expectation.
  Let $\widehat{m}_n(x) := \KSR{x}{D^n}{\RHK_M}$.
  Then 
  \begin{compactenum}
    \item at almost all $x \in M$ with $f_X(x) > 0$, we have $\widehat{m}_{n}(x) \to m_{Y|X}(x)$ in probability,
    \item $J_n := \int_M |\widehat{m}_{n}(x) - m_{Y|X}(x)| f_X(x) d \lambda_M(x) \to 0$ in probability.
  \end{compactenum}
\end{theorem}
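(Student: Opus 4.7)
The plan is to adapt the proof strategy of \textcite{devroye1998hilbert} to the Riemannian setting by exploiting the fact that on a sufficiently small normal neighborhood of any point, the manifold-Hilbert kernel is a bounded perturbation of the Euclidean Hilbert kernel on the tangent space. The overall structure follows the regression decomposition: write $\widehat{m}_n(x) = \widehat{r}_n(x)/\widehat{f}_n(x)$ where $\widehat{r}_n(x) := \tfrac{1}{n}\sum_i Y_i \RHK_M(x,X_i)$ and $\widehat{f}_n(x) := \tfrac{1}{n}\sum_i \RHK_M(x,X_i)$, and control each piece under truncation of the singular kernel.

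First I would establish the analytic prerequisites that make $M$ behave ``Euclidean-like'' at almost every point. Since $M$ is a complete, connected Riemannian manifold, $\lambda_M$ is a Radon measure that is locally doubling, so the Lebesgue differentiation theorem applies: for $\lambda_M$-almost every $x$, both $f_X$ and the Radon--Nikodym derivative defining $m_{Y|X}$ are Lebesgue points with respect to geodesic balls $\Ball{M}{x}{r}$. Fix such an $x$ with $f_X(x) > 0$. By Hopf--Rinow, $\exp_x$ is globally defined on $T_xM$; choose $\rho$ smaller than the injectivity radius at $x$ so that $\exp_x \colon \{v \in T_xM : \|v\|_x < \rho\} \to \Ball{M}{x}{\rho}$ is a diffeomorphism. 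On this neighborhood, $\dist_M(x,\exp_x(v)) = \|v\|_x$ by the minimizing property of radial geodesics, the Jacobian of $\exp_x$ at $v$ tends to $1$ as $\|v\|_x \to 0$ (so volumes of small geodesic balls satisfy $\lambda_M(\Ball{M}{x}{r}) = c_d r^d (1+o(1))$), and hence $\RHK_M(x,\exp_x(v)) = \|v\|_x^{-d}$ matches the Euclidean Hilbert kernel in normal coordinates.

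Next I would perform the core probabilistic argument by decomposing each sample into an inside part ($X_i \in \Ball{M}{x}{\rho}$) and an outside part. For the outside part, $\RHK_M(x,X_i) \le \rho^{-d}$ is bounded, so standard weak law of large numbers arguments show that both the outside numerator and denominator converge to finite expectations of order $O(\rho^{-d})$. For the inside part, pushing forward by $\log_x$ converts the problem into kernel smoothing with the Euclidean Hilbert kernel against a density $(f_X \circ \exp_x) \cdot |\det d\exp_x|$ that is continuous at the origin (thanks to $x$ being a Lebesgue point and the Jacobian being continuous in $v$). On this pushed-forward problem, the truncation and dominated-ratio analysis of \textcite{devroye1998hilbert} applies almost verbatim: choose a truncation level $a_n \to \infty$ such that the truncated quantities satisfy an ordinary Nadaraya--Watson consistency statement, and show that the mass of samples in $\Ball{M}{x}{a_n^{-1/d}}$ is $\Theta(1)$ in probability, driving $\widehat{f}_n(x) \to \infty$ fast enough that the (unbounded) excess in the numerator is negligible relative to the denominator. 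Letting $\rho \to 0$ after $n \to \infty$ yields $\widehat{m}_n(x) \to m_{Y|X}(x)$ in probability, giving part (1).

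For part (2), I would observe that the weighted-average structure of $\widehat{m}_n$ (with nonnegative weights $\RHK_M(x,X_i)$) forces $|\widehat{m}_n(x)| \le \sup_i |Y_i| \le \|Y\|_\infty$ almost surely, so $|\widehat{m}_n(x) - m_{Y|X}(x)| \le 2\|Y\|_\infty$. Combining part (1) with this uniform bound and $\int_M f_X\, d\lambda_M = 1$, Fubini together with a bounded convergence in probability argument (convergence in probability of a uniformly bounded sequence implies convergence of expectations) gives $\mathbb{E}[J_n] \to 0$, whence $J_n \to 0$ in probability.

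The main obstacle I anticipate is the uniform-in-$x$ control of the volume-distortion factor $|\det d\exp_x|$ and of the injectivity radius needed to make the local Euclidean reduction robust. On a general complete manifold the injectivity radius can fail to be bounded below, so truncation levels $a_n$ and neighborhood sizes $\rho$ must be allowed to depend on $x$ while still giving the integral-level conclusion in part (2). This is handled by carrying out part (1) pointwise for each Lebesgue point of $f_X$ and then invoking bounded convergence for part (2), so that only measurability and a.e.\ finiteness of the injectivity radius (guaranteed by completeness) are needed.
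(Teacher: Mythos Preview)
Your plan is workable but takes a more laborious route than the paper. The paper's key observation is that $\log_x$ extends to a globally defined Borel-measurable right inverse of $\exp_x$ on all of $M$ (Lemma~\ref{lemma:Riemannian-logarithm}), with $\dist_M(x,\xi) = \|\log_x(\xi)\|_x$ holding for \emph{every} $\xi \in M$, not just inside the injectivity radius. Consequently, setting $Z_i := \log_x(X_i)$, the manifold estimator at $x$ is \emph{literally equal} as a random variable to the Euclidean Hilbert-kernel estimator at $0 \in T_xM$ with data $\{(Z_i,Y_i)\}$, and Devroye's theorem applies as a black box once one checks that $0$ is a Lebesgue point of the pushed-forward density $f_Z$ and of $m_{Y|Z}\cdot f_Z$ (Propositions~\ref{proposition:push-foward-distribution} and~\ref{proposition:condition-expectation-transformation}). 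Your approach instead localizes within a normal ball, treats the far field as a bounded perturbation, and reopens Devroye's truncation argument on the near field; this should go through, but it forces you to redo rather than invoke Devroye and manufactures the injectivity-radius worry you flag at the end, which the global-$\log_x$ trick sidesteps entirely. One small correction: being a Lebesgue point does not give continuity of the pushed-forward density at $0$; fortunately Devroye's result only needs the Lebesgue-point property, so your argument survives with that word changed.
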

In words, the kernel smoothing regression estimate $\widehat{m}_{n}$ based on the manifold-Hilbert kernel is consistent and interpolates the training data, provided $X$ has a density and $Y$ is bounded. As a consequence, following the same logic as in \textcite{devroye1998hilbert}, the associated classifier $\mathtt{sgn} \circ \widehat{u}_n$ has the interpolating-consistent property. Before proving Theorem~\ref{theorem:main-theorem}, we first review key concepts in probability theory on Riemannian manifolds.

\subsection{Probability on Riemannian manifolds}

Let $\mathcal{B}_M$ be the Borel $\sigma$-algebra of $M$, i.e., the smallest $\sigma$-algebra containing all open subsets of $M$.
We recall the definition of  $M$-valued random variables,
following \textcite[Definition 2]{pennec2006intrinsic}:
\begin{definition}
  \label{definition:probability-space}
  Let $(\Omega, \mathbb{P}, \mathcal{A})$ be a probability space with measure $\mathbb{P}$ and $\sigma$-algebra $\mathcal{A}$.
  A \emph{$M$-valued random variable} $X$ is a Borel-measurable function $\Omega \to M$, i.e., $X^{-1}(B) \in \mathcal{A}$ for all $B \in \mathcal{B}_M$.
\end{definition}

\begin{definition}[Density]
  \label{definition:density}
  A random variable $X$ taking values in $M$ has a \emph{density} if there exists a nonnegative Borel-measurable function $f : M \to [0,\infty]$ such that for all Borel sets $B$ in $M$, we have 
  $
    \Pr(X \in B) = \int_B f d \lambda_M.
  $
  The function $f$ is said to be a \emph{probability density function} (PDF) of $X$.
\end{definition}


Next, we recall the definition of conditional distributions, following \textcite[Ch.\ 10 \S 2]{dudley2018real}:
\begin{definition}
  [Conditional distribution\footnote{
also known as \emph{disintegration measures} according to \textcite{chang1997conditioning}.
  }]
  \label{definition:conditional-distribution}
  Let $(X,Y)$ be a random variable jointly distributed on $M \times \mathbb{R}$.
  Let $P_X(\cdot)$ be the probability measure corresponding to the marginal distribution of $X$.
  A \emph{conditional distribution} for $Y$ given $X$ is a 
  collection of probability measures $P_{Y|X}(\cdot| x)$ on $\mathbb{R}$ indexed by $x \in M$ satisfying the following:
  \begin{compactenum}
    \item For all Borel sets $A \subseteq \mathbb{R}$,  the function $M \ni x \mapsto P_{Y|X}(A|x) \in [0,1]$
      is Borel-measurable.
    \item For all $A \subseteq \mathbb{R}$ and $B \subseteq M$ Borel sets, $\Pr(Y \in A, X \in B) = \int_B P_{Y|X}(A|x) P_X(dx)$.
  \end{compactenum}
  The \emph{conditional expectation}\footnote{More often, the conditional expectation is denoted $\mathbb{E}[Y|X=x]$. However, our notation is more convenient for function composition and compatible with that of \cite{devroye1998hilbert}.}
  is defined as $m_{Y|X}(x) := \int_{\mathbb{R}} y P_{Y|X}(dy|x)$. 
\end{definition}
The existence of a conditional probability for a joint distribution $(X,Y)$ is guaranteed by \textcite[Theorem 10.2.2]{dudley2018real}.
When $(X,Y)$ has a joint density $f_{XY}$ and marginal density $f_X$, the above definition gives the classical formula $P_{Y|X}(A|x) = \int_{A} f_{XY}(x,y)/f_X(x)  dy$ when $\infty>f_X(x) > 0$.
See the first example in \textcite[Ch.\ 10 \S 2]{dudley2018real}.

\subsection{Lebesgue points on manifolds}

\textcite{devroye1998hilbert} proved Theorem~\ref{theorem:main-theorem} when $M = \mathbb{R}^d$ and, moreover, that
part 1 holds for the so-called \emph{Lebesgue points}, whose definition we now recall.

\begin{definition}
\label{definition:Lebesgue-point}
  Let $f: M \to \mathbb{R}$ be an absolutely integrable function and $x \in M$.
  We say that $x$ is a \emph{Lebesgue point} of $f$ if
  $
    f(x)=
    \lim_{r \to 0} \frac{1}{\lambda_M(\Ball{M}{x}{r})} \int_{\Ball{M}{x}{r}} f d \lambda_M
  $.
\end{definition}




For an integrable function, the following result states that almost all points are its Lebesgue points.
For the proof, see \textcite[Remark 2.4]{fukuoka2006mollifier}.
\begin{theorem}
  [Lebesgue differentation]
  \label{theorem:LDT}
  Let $f: M \to \mathbb{R}$ be an absolutely integrable function. Then there exists a set $A \subseteq M$ such that $\lambda_M(A) = 0$ and
  every $x \in M \setminus A$ is a Lebesgue point of $f$.
\end{theorem}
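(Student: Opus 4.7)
The plan is to reduce the statement to the classical Euclidean Lebesgue differentiation theorem by localizing to coordinate charts and controlling the discrepancy between geodesic balls and Euclidean balls.

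First, I would cover $M$ by a countable collection of relatively compact open sets $\{U_\alpha\}$ equipped with charts $\phi_\alpha : U_\alpha \to V_\alpha \subseteq \mathbb{R}^d$; this is possible because a smooth manifold is second-countable. The existence of such an atlas lets me work one chart at a time, since any set that is $\lambda_M$-null in every $U_\alpha$ is globally $\lambda_M$-null as a countable union of null sets.

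Next, inside each chart I would compare the measures and the balls. On $V_\alpha$ the pushforward measure has the form $(\phi_\alpha)_* \lambda_M|_{U_\alpha} = \rho_\alpha \cdot \lambda$, where $\rho_\alpha(u) = \sqrt{\det g_{ij}(\phi_\alpha^{-1}(u))}$ is smooth and strictly positive. For the balls, I would choose the $U_\alpha$ small enough that the coordinate representation of the metric tensor satisfies uniform two-sided bounds on $U_\alpha$; this gives constants $c_1, c_2 > 0$ depending only on $\alpha$ such that for every $x \in U_\alpha$ and every sufficiently small $r > 0$,
\begin{equation*}
\phi_\alpha\!\bigl(\Ball{M}{x}{c_1 r}\bigr) \;\subseteq\; \mathtt{B}_{\mathbb{R}^d}(\phi_\alpha(x), r) \;\subseteq\; \phi_\alpha\!\bigl(\Ball{M}{x}{c_2 r}\bigr).
\end{equation*}
An even cleaner alternative is to work with normal coordinates centered at each point $x$: within the injectivity radius at $x$, the exponential map identifies $\Ball{M}{x}{r}$ with the Euclidean ball of radius $r$ in $T_xM$, and in these coordinates the volume density $\sqrt{\det g}$ is continuous with value $1$ at the origin.

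Then I would apply the Euclidean Lebesgue differentiation theorem to the pulled-back integrable function $f_\alpha := (f \circ \phi_\alpha^{-1}) \cdot \rho_\alpha$ on $V_\alpha$. This yields a Lebesgue-null set $A_\alpha \subseteq V_\alpha$ such that for every $u \notin A_\alpha$, Euclidean ball averages of $f_\alpha$ centered at $u$ converge to $f_\alpha(u)$. Using continuity and positivity of $\rho_\alpha$, together with the ball comparison above, I would verify that for every $x \in U_\alpha$ with $\phi_\alpha(x) \notin A_\alpha$,
\begin{equation*}
\frac{1}{\lambda_M(\Ball{M}{x}{r})} \int_{\Ball{M}{x}{r}} f \, d\lambda_M \;\longrightarrow\; f(x)
\quad \text{as } r \to 0^+,
\end{equation*}
so $x$ is a Lebesgue point of $f$. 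Setting $A := \bigcup_\alpha \phi_\alpha^{-1}(A_\alpha)$ then gives a $\lambda_M$-null exceptional set with the required property.

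The main obstacle is the precise transfer of the Euclidean differentiation result along the chart: I have to argue that a geodesic ball $\Ball{M}{x}{r}$ is asymptotically indistinguishable, both in shape and in volume, from a Euclidean ball of radius $r$ at $\phi_\alpha(x)$ up to a smooth density. Once this asymptotic equivalence is set up — either by uniform metric bounds on a relatively compact chart, or by working intrinsically in normal coordinates where $\sqrt{\det g}(0) = 1$ — the Euclidean theorem does the substantive work and the rest is bookkeeping over a countable cover.
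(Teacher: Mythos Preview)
The paper does not actually prove Theorem~\ref{theorem:LDT}: it simply cites \textcite[Remark~2.4]{fukuoka2006mollifier} and moves on. So there is no in-paper proof to compare against; your chart-localization plan is the standard argument and is essentially correct.

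Two small points worth tightening. First, the sandwich you write down shows that chart images of geodesic balls have bounded eccentricity with respect to Euclidean balls; to pass from Euclidean Lebesgue points of $f_\alpha$ to convergence of averages over these non-round sets you need the \emph{strong} form of the Euclidean theorem, i.e.\ $\tfrac{1}{\lambda(B)}\int_B |f_\alpha - f_\alpha(u)|\,d\lambda \to 0$, not merely convergence of averages. Second, your normal-coordinate alternative is point-dependent (the chart changes with $x$), so it does not directly hand you a single measurable null exceptional set; the fixed countable atlas version is the cleaner of your two options for that reason.

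It is worth noting that the ingredients you reach for---the change-of-variables density $\sqrt{\det g}$ and the fact that in normal coordinates it equals $1$ at the center, so that geodesic and Euclidean ball volumes agree to leading order---are exactly what the paper later develops in Lemma~\ref{lemma:change-of-variables} and Lemma~\ref{lemma:volume-asymptotic-equivalence} to push Lebesgue points through $\log_x$ in the \emph{opposite} direction for the proof of Theorem~\ref{theorem:main-theorem}. Your argument and the paper's subsequent machinery are built from the same pieces, assembled for converse purposes.
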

Next, for the reader's convenience, we restate \textcite[Theorem 1]{devroye1998hilbert}, emphasizing the connection to Lebesgue points.
\begin{theorem}[\textcite{devroye1998hilbert}]
  \label{theorem:main-theorem-Devroye}
  Let $M = \mathbb{R}^d$ be the flat Euclidean space.
  Then Theorem~\ref{theorem:main-theorem} holds. 
  Moreover, Part 1 holds for all $x$ that is a Lebesgue point to both $f_X$ and $m_{Y|X} \cdot f_X$.
\end{theorem}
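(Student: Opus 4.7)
The statement is essentially a restatement of Theorem~1 of \cite{devroye1998hilbert}, with the Lebesgue-point refinement in the ``moreover'' clause extracted from their Section~3. A proof would invoke their result directly; for completeness I sketch the main ideas below.

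For Part~1 at a fixed Lebesgue point $x$ of both $f_X$ and $m_{Y|X}\cdot f_X$ with $f_X(x) > 0$, I would rewrite $\widehat{m}_n(x) - m_{Y|X}(x)$ as $\bigl(\sum_{i=1}^n (Y_i - m_{Y|X}(x))\,K(x,X_i)\bigr) / \bigl(\sum_{j=1}^n K(x,X_j)\bigr)$ and decompose the numerator into a noise part $\sum_i (Y_i - m_{Y|X}(X_i)) K(x,X_i)$ and a bias part $\sum_i (m_{Y|X}(X_i) - m_{Y|X}(x)) K(x,X_i)$, where $K(x,z):=\|x-z\|^{-d}$. The chief technical obstacle is that $\mathbb{E}[K(x,X)] = \int \|x-z\|^{-d}\, f_X(z)\,dz = +\infty$ whenever $f_X(x)>0$, so naive concentration on the denominator fails: writing the integral in polar coordinates around $x$ and using the Lebesgue-point expansion of $f_X$ shows the mass $\int_{\|v\|<r}\|v\|^{-d} f_X(x+v)\,dv$ diverges logarithmically as $r\downarrow 0$. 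The argument in \cite{devroye1998hilbert} circumvents this by truncation: for a level $\beta_n \to \infty$ chosen so that the truncated second moment of $K^{(n)}(x,z) := \min\{K(x,z), \beta_n\}$ is controllable, the truncated numerator and denominator obey a weak law of large numbers, and their normalized means combine—via the Lebesgue-point expansions of $f_X$ and $m_{Y|X}\cdot f_X$ around $x$—to produce the ratio $m_{Y|X}(x)$. The residual untruncated tail, coming from the samples inside a ball around $x$ of vanishing radius $\beta_n^{-1/d}$, changes the ratio by at most $O(\|Y\|_\infty)$ times the relative tail mass, and a second-moment estimate shows this correction vanishes in probability.

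For Part~2, the integrand $|\widehat{m}_n(x) - m_{Y|X}(x)|\,f_X(x)$ is uniformly dominated by $2\|Y\|_\infty f_X(x)$, which is integrable with respect to $\lambda$ on $\mathbb{R}^d$. The Euclidean Lebesgue differentiation theorem (the special case $M = \mathbb{R}^d$ of Theorem~\ref{theorem:LDT}) shows that the Lebesgue-point hypothesis of Part~1 holds at $\lambda$-almost every $x$ with $f_X(x) > 0$, hence $\widehat{m}_n \to m_{Y|X}$ in probability at $\lambda$-almost every such $x$. Combining this with Fubini and the dominated convergence theorem yields $\mathbb{E}\,J_n \to 0$, and therefore $J_n \to 0$ in probability.
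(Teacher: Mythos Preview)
Your proposal is correct and matches the paper's treatment: the paper does not prove Theorem~\ref{theorem:main-theorem-Devroye} at all but simply restates \textcite[Theorem~1]{devroye1998hilbert} as a black box, exactly as you identify in your opening sentence. Your supplementary sketch of the truncation argument and the dominated-convergence reduction for Part~2 is faithful to the original Devroye--Gy\"orfi--Krzy\.zak proof and goes beyond what the present paper provides.
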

The above result will be used in our proof of Theorem~\ref{theorem:main-theorem} below.

  \section{Proof of Theorem~\ref{theorem:main-theorem}}

  The focal point of the first subsection is  Lemma~\ref{lemma:Riemannian-logarithm} which shows the Borel measurability of extensions of the so-called Riemannian logarithm. The second subsection contains two key results regarding densities of $M$-valued random variables transformed by the Riemannian logarithm. The final subsection proves Theorem~\ref{theorem:main-theorem} leveraging results from the preceding two subsections.

\subsection{The Riemannian logarithm}
\label{section:riemannian-logarithm}
Throughout, $x$ is assumed to be an arbitrary point of $M$. 
Let $U_xM = \{v \in T_xM : \|v\|_x = 1\} \subseteq T_xM$ denote the set of unit tangent vectors.
Define a function $\tau_x: U_xM \to (0,\infty]$ as follows\footnote{Positivity of $\tau_x$ is asserted at \textcite[eq.\ (4.1)]{sakai1996riemannian}}:
\[
  \tau_x(u) := \sup\{ t > 0 : t = \dist_M(x, \exp_x(tu))\}.
\]
The \emph{tangent cut locus} is the set $\tilde{C}_x \subseteq T_xM$ defined by
$
  \tilde{C}_x := \{ \tau_x(u) u: u \in U_xM,\, \tau_x(u) < \infty\}.
$
Note that it is possible for $\tau_x(u) = \infty$ for all $u \in U_xM$ in which case $\tilde{C}_x$ is empty.
The \emph{cut locus} is the set $C_x:=\exp_x(\tilde{C}_x) \subseteq M$.

The \emph{tangent interior set} is 
$\tilde{I}_x := \{ tu: 0 \le t < \tau_x(u), u \in U_xM\}$
and the \emph{interior set} is the set $ I_x:=\exp_x(\tilde{I}_x)$.
Finally, define $\tilde{D}_x := \tilde{I}_x \cup \tilde{C}_x$.
Note that for each $z = tu \in \tilde{I}_x$, we have 
\begin{equation}
  \label{equation:exponential-map-preserves-distance}
  \|z\|_x = t = \dist_M(x,\exp_x(tu)) = \dist_M(x,\exp_x(z)).
\end{equation}
Consider the example where $M=\mathbb{S}^2$ as in Figure~\ref{fig:exponential-map}. Then $\tau_x(u) = \pi$ for all $u \in U_xM$. Thus, the tangent interior set $\tilde{I}_x = 
\Ball{\mathbb{R}^2}{0}{\pi}$, the open disc of radius $\pi$ centered at the origin.

When restricted to $\tilde{I}_x$, the exponential map $\exp_x|_{\tilde{I}_x}: \tilde{I}_x \to I_x$ is a diffeomorphism. Its functional inverse, denoted by $\log_x|_{I_x}$, is called the \emph{Riemannian Logarithm}
\cite{bendokat2020grassmann,zimmermann2017matrix}.
In previous works, $\log_x|_{I_x}$ is only defined from $I_x$ to $\tilde{I}_x$.
The next result shows that the domain of $\log_x|_{I_x} :  I_x \to \tilde{I}_x$ can be extended to $\log_x : M \to \tilde{D}_x$ while remaining Borel-measurable.
\begin{lemma}
  \label{lemma:Riemannian-logarithm}
  For all $x \in M$, there exists a Borel measurable map $\log_x : M \to T_xM$ such that $\log_x(M) \subseteq \tilde{{D}}_x$ and $\exp_x \circ \log_x$ is the identity on $M$.
  Furthermore, for all $x,\xi \in M$, we have 
$\dist_M(x,\xi) = \| \log_x (\xi)\|_x$.
\end{lemma}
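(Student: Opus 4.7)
The plan is to apply the Kuratowski--Ryll-Nardzewski measurable selection theorem to the multifunction $F : M \to 2^{T_xM}$ defined by $F(\xi) := \exp_x^{-1}(\xi) \cap \tilde{D}_x$, and then observe that any selection into $\tilde{D}_x$ automatically satisfies the norm identity.

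The geometric preliminaries come first. I would show that $\tilde{D}_x$ is a closed subset of $T_xM$. Writing $\tilde{D}_x = \{0\} \cup \{tu : u \in U_xM,\ 0 < t \le \tau_x(u)\}$, with the convention that every $t > 0$ is admissible when $\tau_x(u) = +\infty$, closedness reduces to the continuity of $\tau_x : U_xM \to (0,+\infty]$, a classical fact in Riemannian geometry (see \textcite[Ch.\ III]{sakai1996riemannian}). Next, by Hopf-Rinow, for every $\xi \in M$ there is a unit-speed minimizing geodesic from $x$ to $\xi$ whose initial velocity $v$ satisfies $\|v\|_x = \dist_M(x,\xi)$ and $\|v\|_x \le \tau_x(v/\|v\|_x)$, so $v \in \tilde{D}_x$. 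Hence $\exp_x|_{\tilde{D}_x} : \tilde{D}_x \to M$ is a continuous surjection and each $F(\xi)$ is nonempty and closed. Finally, equation~\eqref{equation:exponential-map-preserves-distance} extends from $\tilde{I}_x$ to all of $\tilde{D}_x$: for $v = \tau_x(u) u \in \tilde{C}_x$ the identity $\|v\|_x = \dist_M(x,\exp_x(v))$ follows by letting $t \nearrow \tau_x(u)$ in $t = \dist_M(x,\exp_x(tu))$ and using continuity of $\exp_x$ and $\dist_M$.

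Next I would verify weak measurability of $F$: for every open $U \subseteq T_xM$,
\[
  \{\xi \in M : F(\xi) \cap U \ne \emptyset\} = \exp_x(U \cap \tilde{D}_x).
\]
Since $T_xM \cong \mathbb{R}^d$ is $\sigma$-compact, I may write $U = \bigcup_n K_n$ with each $K_n$ compact. Then $U \cap \tilde{D}_x = \bigcup_n (K_n \cap \tilde{D}_x)$, each term being a closed subset of a compact set, hence compact. Therefore $\exp_x(U \cap \tilde{D}_x) = \bigcup_n \exp_x(K_n \cap \tilde{D}_x)$ is a countable union of compact subsets of $M$, hence Borel.

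The Kuratowski--Ryll-Nardzewski theorem now supplies a Borel measurable selection $\log_x : M \to T_xM$ with $\log_x(\xi) \in F(\xi) \subseteq \tilde{D}_x$ and $\exp_x(\log_x(\xi)) = \xi$; the norm identity $\dist_M(x,\xi) = \|\log_x(\xi)\|_x$ is then immediate from the extension of~\eqref{equation:exponential-map-preserves-distance} to $\tilde{D}_x$. The main obstacle is the geometric bookkeeping of the first step---particularly the continuity of $\tau_x$ and the resulting closedness of $\tilde{D}_x$---which is what makes $F$ amenable to a standard measurable selection argument; once these topological facts are in hand, invoking the selection theorem is routine.
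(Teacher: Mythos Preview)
Your proposal is correct and follows essentially the same route as the paper: define the closed-valued multifunction $\xi \mapsto \exp_x^{-1}(\xi)\cap \tilde{D}_x$, verify weak measurability via $\sigma$-compactness of $T_xM$ so that $\exp_x(U\cap\tilde{D}_x)$ is an $F_\sigma$ set, apply Kuratowski--Ryll-Nardzewski, and deduce the norm identity by a continuity/limit argument extending~\eqref{equation:exponential-map-preserves-distance} to $\tilde{C}_x$. Your write-up is, if anything, slightly more explicit about nonemptiness of $F(\xi)$ (via Hopf--Rinow) and closedness of $\tilde{D}_x$, but the substance is the same.
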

\begin{proof}[Proof sketch]
The full proof of the lemma is provided in Section~\ref{appendix:proof-of-lemma:Riemannian-logarithm} of the \supplemental.
Below, we illustrate the idea of the proof using the example when $M=\mathbb{S}^2$ as in  Figure~\ref{fig:exponential-map}.

Let $x \in \mathbb{S}^2$ be the ``northpole'' (the blue point). The tangent cut locus $\tilde{C}_x$ is the dashed circle in the left panel of Figure~\ref{fig:exponential-map}. The exponential map $\exp_x$ is one-to-one on $\tilde{D}_x$ except on the dashed circle, which all gets mapped to $-x$, the ``southpole'' (the orange point). A consequence of the measurable selection theorem\footnote{Kuratowski–Ryll-Nardzewski measurable selection theorem (see \cite[Theorem 6.9.3]{bogachev2007measure})}
is that $\log_x$ can be extended to be a Borel-measurable right inverse of $\exp_x$ by selecting $z$ point on $\tilde{C}_x$ such that $\log_x(-x) = z$.
\end{proof}

\subsection{Random variable transforms}

In the previous subsection, we showed that $\log_x : M \to T_xM$ is Borel-measurable. Now, recall that $T_xM$ is equipped with the inner product $\langle\cdot,\cdot \rangle_x$, i.e., the Riemannian metric. Below, for each $x \in M$ choose an orthonormal basis on $T_xM$ with respect to $\langle \cdot,\cdot \rangle$. Then $T_xM$ is isomorphic as an inner product space to $\mathbb{R}^d$ with the usual dot product.

Our first result of this subsection is a ``change-of-variables formula'' for computing the densities of $M$-valued random variables after the $\log_x$ transform. Recall that $\lambda_M$ is the Riemann-Lebesgue measure on $M$ and $\lambda$ is the ordinary Lebesgue measure on $\mathbb{R}^d = T_xM$.

\begin{proposition}
  \label{proposition:push-foward-distribution}
    Let $x \in M$ be fixed. There exists a Borel measurable function $\nu_x : M \to \mathbb{R}$ with the following properties:
    \begin{compactenum}[(i)]
      \item
      \label{item:logX}
      Let $X$ be a random variable on $M$ with density $f_X$
  and let $Z := \log_x(X)$. 
    Then $Z$ is a random variable on $T_xM$ with density $f_Z(z) := f_X( \exp_x(z)) \cdot  \nu_x ( \exp_x(z))$.
    \item 
    \label{item:mapping-lebesgue-points}
  Let $f: M \to \mathbb{R}$ be an absolutely integrable function such that $x$ is a Lebesgue point of $f$. Define $f: T_xM \to \mathbb{R}$ by $h(z) := f( \exp_x(z)) \cdot  \nu_x ( \exp_x(z))$.
  Then $0 \in T_xM$ is a Lebesgue point for $h$.
    \end{compactenum}
  
\end{proposition}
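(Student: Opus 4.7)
The plan is to define $\nu_x$ as (essentially) the Jacobian determinant of $\exp_x$ transported from the tangent space back to $M$ via the Riemannian logarithm, and then to carry out a standard change-of-variables argument, being careful about the cut locus.

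\textbf{Step 1 (construction of $\nu_x$).} Fix an orthonormal basis so that $T_xM \cong \mathbb{R}^d$ as inner product spaces. Recall that $\exp_x$ restricts to a smooth diffeomorphism $\tilde{I}_x \to I_x$, where $I_x \subseteq M$ is open. For $y \in I_x$ I would set
\[
\nu_x(y) := \bigl|\det (d\exp_x)_{\log_x(y)}\bigr|,
\]
computed with respect to the chosen basis on $T_xM$ and the volume form on $M$ defining $\lambda_M$; on the complementary cut locus $C_x = M \setminus I_x$ set $\nu_x \equiv 0$ (any Borel value works since $\lambda_M(C_x) = 0$). Borel measurability follows from smoothness of $\exp_x$, continuity of $\log_x|_{I_x}$, and openness of $I_x$. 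The key feature is $\nu_x(x) = 1$, because $(d\exp_x)_0$ is the identity on $T_xM$.

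\textbf{Step 2 (part (i)).} For any Borel set $A \subseteq T_xM$, by Lemma~\ref{lemma:Riemannian-logarithm},
\[
\Pr(Z \in A) = \Pr(X \in \log_x^{-1}(A)) = \int_{\log_x^{-1}(A)} f_X\, d\lambda_M.
\]
The cut locus $C_x$ is the $\exp_x$-image of the Lebesgue-null set $\tilde{C}_x$, so $\lambda_M(C_x) = 0$, and we may restrict the integral to $\log_x^{-1}(A) \cap I_x = \exp_x(A \cap \tilde{I}_x)$. The smooth change-of-variables formula applied to the diffeomorphism $\exp_x : \tilde{I}_x \to I_x$ then gives
\[
\int_{\exp_x(A \cap \tilde{I}_x)} f_X\, d\lambda_M = \int_{A \cap \tilde{I}_x} f_X(\exp_x(z))\,\nu_x(\exp_x(z))\,dz = \int_A f_Z\, d\lambda,
\]
where the last equality uses that $A \setminus \tilde{I}_x$ has Lebesgue measure zero. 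This identifies the claimed density $f_Z$.

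\textbf{Step 3 (part (ii)).} For $r$ less than the injectivity radius at $x$ (which is positive because $M$ is complete), the Euclidean ball $B_r := \{z \in T_xM : \|z\|_x < r\}$ is contained in $\tilde{I}_x$, and by Equation~\eqref{equation:exponential-map-preserves-distance} $\exp_x$ maps $B_r$ diffeomorphically onto $\Ball{M}{x}{r}$. Applying the change of variables from Step~2 yields
\[
\int_{B_r} h\, d\lambda = \int_{\Ball{M}{x}{r}} f\, d\lambda_M,
\]
and the same identity with $f \equiv 1$ combined with continuity of $\nu_x$ at $x$ gives $\lambda_M(\Ball{M}{x}{r})/\lambda(B_r) \to \nu_x(x) = 1$ as $r \to 0$. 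Dividing and passing to the limit, the Lebesgue-point assumption on $f$ at $x$ produces
\[
\lim_{r \to 0} \frac{1}{\lambda(B_r)} \int_{B_r} h\, d\lambda = 1 \cdot f(x) = h(0),
\]
which is exactly the Lebesgue-point condition at $0$ for $h$. The main obstacle I anticipate is the bookkeeping around the cut locus: the argument rests on the classical facts that $\tilde{C}_x$ has Lebesgue measure zero in $T_xM$ and hence $C_x$ has $\lambda_M$-measure zero in $M$, so these nuisance sets vanish in every integral. Once this is invoked and the injectivity radius is noted to be positive, the proposition reduces to a single application of the smooth change-of-variables formula together with the Jacobian normalization $(d\exp_x)_0 = \mathrm{Id}$.
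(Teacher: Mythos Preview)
Your approach matches the paper's: the paper defines $\nu_x(\xi)=\sqrt{|\det G_x^\perp(\xi)|}$ via the metric in normal coordinates (zero-extended off $I_x$), which is exactly your Jacobian $|\det(d\exp_x)_{\log_x(\xi)}|$ on $I_x$, and both proofs then run the same change-of-variables for part~(i) and the same ball-to-ball comparison for part~(ii). One phrasing slip to fix: the sentence ``the last equality uses that $A\setminus\tilde I_x$ has Lebesgue measure zero'' is false for generic Borel $A\subseteq T_xM$; what makes the step work is that $Z=\log_x(X)\in\tilde D_x$ always and $\Pr(Z\in\tilde C_x)\le\Pr(X\in C_x)=0$, so only $A\cap\tilde I_x$ contributes (the paper leaves this equally implicit). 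As a minor improvement over the paper, your derivation of $\lambda_M(\Ball{M}{x}{r})/\lambda(B_r)\to 1$ from continuity of $\nu_x\circ\exp_x$ at $0$ is more self-contained than the paper's appeal to the scalar-curvature volume expansion.
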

\begin{proof}[Proof sketch]
The full proof of the proposition is in  \supplemental~Section~\ref{appendix:proof-of-proposition:push-foward-distribution}.
The function $\nu_x$ is the Jacobian of the change-of-variables formula for integrating $\int_{\tilde{B}} 
    f_Z
    d\lambda$ where $\tilde{B} \subseteq T_xM$ is a Borel subset.
    See \supplemental~Lemma~\ref{lemma:change-of-variables}
for the exact definition of $\nu_x$. Part (\ref{item:logX}) is a simple consequence of this change-of-variables formula, which says that $\int_{\tilde{B}} 
    f_Z
    d\lambda = \int_{\exp_x(\tilde{B})} h d\lambda_M$.

For part (\ref{item:mapping-lebesgue-points}), the key observations are that (a) $\nu_x(\exp_x(0)) = \nu_x(0) = 1$ and (b) the volumes of $\Ball{M}{x}{r}$
and $\Ball{T_xM}{0}{r}$ are equal as $r \to 0$. More precisely,
$
\lim_{r \to 0} \frac{
\lambda_M(\Ball{M}{x}{r})
}
{
\lambda(\Ball{T_xM}{0}{r})
} 
= 1
$.
From these two observations, it is straightforward to directly verify Definition~\ref{definition:Lebesgue-point}.
\end{proof}


\begin{proposition}
  \label{proposition:condition-expectation-transformation}  
Let $(X,Y)$ have a joint distribution on $M \times \mathbb{R}$ such that the marginal of $X$ has a density $f_X$ on $M$.
Let $P_{Y|X}(\cdot|\cdot)$ be a conditional distribution for $Y$ given $X$.
Let $x \in M$.
Define $Z:=\log_x(X)$ and consider the joint distribution $(Z,Y)$ on $T_pM\times \mathbb{R}$.
Then $P_{Y|Z}(\cdot|\cdot) := P_{Y|X}(\cdot|\exp_x(\cdot))$ is a conditional distribution for $Y$ given $Z$.
Consequently, $m_{Y|X}\circ \exp_x= m_{Y|Z}$.
\end{proposition}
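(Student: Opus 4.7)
The plan is to directly verify that the candidate $P_{Y|Z}(\cdot|z) := P_{Y|X}(\cdot|\exp_x(z))$ satisfies both clauses of Definition~\ref{definition:conditional-distribution}, then deduce the conditional expectation identity by integrating $y$ against it. The key structural fact we will lean on is $\exp_x \circ \log_x = \mathrm{id}_M$ from Lemma~\ref{lemma:Riemannian-logarithm}, together with the Borel measurability of $\log_x$ supplied by that same lemma.

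First I would dispatch measurability. For each Borel $A \subseteq \mathbb{R}$, the map $z \mapsto P_{Y|Z}(A|z)$ is the composition of the smooth (hence continuous and Borel-measurable) exponential map $\exp_x : T_xM \to M$ with the Borel-measurable map $x \mapsto P_{Y|X}(A|x)$ guaranteed by clause (i) of Definition~\ref{definition:conditional-distribution} applied to $P_{Y|X}$. Hence clause (i) holds for $P_{Y|Z}$.

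Next I would establish the marginalization identity (clause (ii)). Fix Borel sets $A \subseteq \mathbb{R}$ and $\tilde{B} \subseteq T_xM$, and set $B := \log_x^{-1}(\tilde{B}) \subseteq M$, which is Borel by Lemma~\ref{lemma:Riemannian-logarithm}. Since $Z = \log_x(X)$, the law $P_Z$ of $Z$ is the pushforward of $P_X$ under $\log_x$, so the pushforward change-of-variables formula gives
\begin{equation*}
\int_{\tilde{B}} P_{Y|Z}(A|z)\, P_Z(dz) = \int_B P_{Y|X}\bigl(A \,\big|\, \exp_x(\log_x(\xi))\bigr)\, P_X(d\xi) = \int_B P_{Y|X}(A|\xi)\, P_X(d\xi),
\end{equation*}
where the second equality uses $\exp_x \circ \log_x = \mathrm{id}_M$. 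Now clause (ii) applied to the original $P_{Y|X}$ rewrites this integral as $\Pr(Y \in A, X \in B)$, and since $\{Z \in \tilde{B}\} = \{X \in B\}$ by definition of $B$, this equals $\Pr(Y \in A, Z \in \tilde{B})$, as required.

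Finally, the conditional expectation equality is a one-line consequence of the definition: for every $z \in T_xM$,
\begin{equation*}
m_{Y|Z}(z) = \int_{\mathbb{R}} y\, P_{Y|Z}(dy|z) = \int_{\mathbb{R}} y\, P_{Y|X}(dy|\exp_x(z)) = m_{Y|X}(\exp_x(z)),
\end{equation*}
i.e.\ $m_{Y|Z} = m_{Y|X} \circ \exp_x$. The only subtle point worth flagging is that $\log_x$ is not a global bijection from $M$ to $T_xM$ — the cut locus is collapsed in one direction — so it is essential that we use the one-sided identity $\exp_x \circ \log_x = \mathrm{id}_M$ rather than $\log_x \circ \exp_x$. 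Lemma~\ref{lemma:Riemannian-logarithm} provides exactly this right inverse, and with it in hand the argument reduces to routine pushforward bookkeeping with no measure-zero exceptions to worry about.
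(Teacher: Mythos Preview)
Your argument is correct, and it differs from the paper's in a useful way. The paper verifies clause (ii) of Definition~\ref{definition:conditional-distribution} by passing through the density: it writes $P_Z(dz) = f_Z(z)\,d\lambda(z)$ with $f_Z = (f_X\circ\exp_x)\cdot(\nu_x\circ\exp_x)$ from Proposition~\ref{proposition:push-foward-distribution}, then invokes the Riemannian change-of-variables Lemma~\ref{lemma:change-of-variables} (with its Jacobian $\nu_x$) to convert $\int_{\tilde B}(\cdots)f_Z\,d\lambda$ into $\int_{\exp_x(\tilde B\cap\tilde I_x)}(\cdots)f_X\,d\lambda_M$, and finally appeals to Proposition~\ref{proposition:logX-random-var} to match the event $\{Z\in\tilde B\}$ with $\{X\in\exp_x(\tilde B\cap\tilde I_x)\}$ up to a null set. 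You instead work directly at the level of measures: since $P_Z=(\log_x)_*P_X$, the abstract pushforward identity $\int g\,dP_Z=\int g\circ\log_x\,dP_X$ plus $\exp_x\circ\log_x=\mathrm{id}_M$ from Lemma~\ref{lemma:Riemannian-logarithm} immediately reduces clause (ii) for $P_{Y|Z}$ to clause (ii) for $P_{Y|X}$ on the Borel set $B=\log_x^{-1}(\tilde B)$. Your route is shorter, does not touch $\nu_x$ or the cut-locus decomposition, and in fact never uses the hypothesis that $X$ has a density; the paper's route has the virtue of reusing the exact machinery already built for Proposition~\ref{proposition:push-foward-distribution}, so no new ingredients are introduced.
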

\begin{proof}[Proof sketch]
The full proof of the Proposition is in  \supplemental~Section~\ref{appendix:proof-of-proposition:condition-expectation-transformation}. The idea is the same as in the proof of 
 Proposition~\ref{proposition:push-foward-distribution}, except that the probability density $f_Z$ is replaced by an appropriate conditional probability density.
\end{proof}

\subsection{
Finishing up the Proof of Theorem~\ref{theorem:main-theorem}
}

Fix $x \in M$ such that $x$ is a Lebesgue point of $f_X$ and $m_{Y|X} \cdot f_X$. 
Note that by Theorem~\ref{theorem:LDT}, almost all $x \in M$ has this property.
Next, let $Z = \log_x(X)$ and $f_Z$ be as in 
  Proposition~\ref{proposition:push-foward-distribution}-(\ref{item:logX}).
  Then
  \begin{compactenum}
  \item $f_Z=(f_X \circ \exp_x)  \cdot (\nu_x \circ \exp_x)$, and
\item 
$(m_{Y|X} \circ \exp_x) \cdot f_Z =(m_{Y|X} \circ \exp_x) \cdot (f_X\circ \exp_x)  \cdot (\nu_x \circ \exp_x)$.
  \end{compactenum}
  Now, proposition~\ref{proposition:push-foward-distribution}-(\ref{item:mapping-lebesgue-points}) implies that
  $0$ is a Lebesgue point of both $f_Z$ and $(m_{Y|X} \circ \exp_x) \cdot f_Z$.
  Furthermore, by Proposition~\ref{proposition:condition-expectation-transformation}, we have 
  $m_{Y|X} \circ \exp_x = m_{Y|Z}$.
  Thus, $0$ is a Lebesgue point of $f_Z$ and $m_{Y|Z}\cdot f_Z$.

Now, let $D_n := \{(X_i,Y_i)\}_{i\in[n]}$.
Define $Z_i := \log_x(X_i)$, which are i.i.d copies of the random variable $Z := \log_x(X)$,
and let
$\tilde{D}_n := \{ (Z_i, Y_i)\}_{i\in[n]}$.
Then we have
\begin{align*}
  \KSR{x}{D^n}{\RHK_M}
  &\overset{\mathrm{(a)}}{=}
  \frac{\sum_{i=1}^n Y_i \cdot \dist_M(x,X_i)^{-d}}{\sum_{j=1}^n \dist_M(x,X_j)^{-d}}
  \overset{\mathrm{(b)}}{=}
  \frac{\sum_{i=1}^n Y_i \cdot \|Z_i\|_x^{-d}}{\sum_{j=1}^n \|Z_j\|_x^{-d}}
  \\&\overset{\mathrm{(c)}}{=}
  \frac{\sum_{i=1}^n Y_i \cdot \dist_{\mathbb{R}^d}(0,Z_i)^{-d}}{\sum_{j=1}^n \dist_{\mathbb{R}^d}(0,Z_j)^{-d}}
  \overset{\mathrm{(d)}}{=}
   \KSR{0}{\tilde{D}^n}{\RHK_{\mathbb{R}^d}}
\end{align*}
where equations marked by (a) and (d) follow from 
Equation~\eqref{equation:definition-kernel-smoothing-regression},
(b)  from  Lemma~\ref{lemma:Riemannian-logarithm}, and (c) from the fact that the inner product space $T_xM$ with $\langle \cdot, \cdot \rangle_x$ is isomorphic to $\mathbb{R}^d$ with the usual dot product.
  By Theorem~\ref{theorem:main-theorem-Devroye}, we have
$
  \KSR{0}{\tilde{D}^n}{\RHK_{\mathbb{R}^d}}
  \to
 m_{Y|Z}(0)
$
in probability. In other words, for all $\epsilon >0$,
\[
  \lim_{n\to\infty} 
\Pr\{
|
  \KSR{0}{\tilde{D}^n}{\RHK_{\mathbb{R}^d}}
-m_{Y|Z}(0)| > \epsilon\}
=0.
\]
By Proposition~\ref{proposition:condition-expectation-transformation}, we have $m_{Y|Z}(0) = m_{Y|Z}(\exp_x(0)) = m_{Y|Z}(x)$.
Therefore, 
\[
  \Big\{
  |
  \KSR{0}{\tilde{D}^n}{\RHK_{\mathbb{R}^d}}
  -m_{Y|Z}(0)| > \epsilon
  \Big\}
  =
  \Big\{
    |
  \KSR{x}{D^n}{\RHK_M}
  -m_{Y|X}(x)| > \epsilon
  \Big\}
\]
as events. Thus, $ 
  \KSR{x}{D^n}{\RHK_M}
  \to m_{Y|X}(x)$ converges in probability, proving Theorem~\ref{theorem:main-theorem} part 1. As noted in \textcite[\S2]{devroye1998hilbert},  part 2 of Theorem~\ref{theorem:main-theorem} is an immediate consequence of part 1.

\section{Application to the $d$-Sphere}
\label{section:hawrp}}

The $d$-dimensional round sphere is $\mathbb{S}^d := \{ x \in \mathbb{R}^{d+1} : x^2_1 + \cdots + x^2_{d+1} = 1\}$.
Here, a \emph{round} sphere assumes that $\mathbb{S}^d$ has the \emph{arc-length metric}:
\begin{equation}
  \label{equation:arclength-metric}
  \dist_{\mathbb{S}^d}(x,z) = \angle(x,z) = \cos^{-1} (x^\top z) \in [0,\pi].
\end{equation}

Let $\mathcal{S}$ be a set and $\sigma : M \to \mathcal{S}$ be a function. The \emph{partition induced} by $\sigma$ is defined by $\{ \sigma^{-1}(s) : s \in \mathtt{Range}(\sigma)\}$.
For example, when $M = \mathbb{S}^{d}$ and $W \in \mathbb{R}^{(d+1) \times h}$, then the function $\sigma_W : \mathbb{S}^{d} \to \{\pm 1\}^h$ defined by $\sigma_W(x)= \mathtt{sgn}(W^\top x)$ induces a hyperplane arrangement partition.

Let $\mathbb{N} = \{1,2,\dots \}$ and $\mathbb{N}_0 = \mathbb{N} \cup \{0\}$ denote the positive and non-negative integers.

\begin{definition}[Random hyperplane arrangement partition]
  \label{definition:random-HA-partition}
  Let $ d \in \mathbb{N}$ and $M = \mathbb{S}^{d}$. Let $q <0$ be a negative number, and let $H$ be a random variable with probability mass function $p_H : \mathbb{N}_0 \to [0,1]$ such that $p_H(h) > 0$ for all $h$.
Define the following weighted random partition $\rha := (\Theta, \mathfrak{P}, \alpha)$:
\begin{compactenum}
  \item The parameter space $\Theta = \bigsqcup_{h = 0}^\infty \mathbb{R}^{(d+1) \times h}$ is the disjoint union of all $(d+1)\times h$ matrices.
    Element of $\Theta$ are matrices $\theta = W \in \mathbb{R}^{(d+1) \times h}$ where the number of columns $h \in \{0,1,2,\dots\}$ varies.
    By convention, if $h = 0$, the partition $\mathcal{P}_{\theta} = \mathcal{P}_{W}$ is the trivial partition $\{\mathbb{S}^d\}$. If $h > 0$, $\mathcal{P}_{W}$ is the partition induced by $x \mapsto \mathtt{sgn}(W^\top x)$.
    \item The probability $\mathfrak{P}$ is constructed by the procedure where we first sample $h \sim p_H(h)$, then sample the entries of $W \in \mathbb{R}^{d\times h}$ i.i.d\ according to   $\mathtt{Gaussian}(0,1)$.
    \item  For $\theta \in \Theta$, define
      $
          \alpha(\theta) := 
          \pi^{q}
  p_H(h)^{-1}
  (-1)^h \binom{q}{h}
  $, where
$\binom{q}{h} := 
\tfrac{1}{h!}
\prod_{j=0}^{h-1} (q-j)
$.
\end{compactenum}
\end{definition}
Note that $(-1)^h \binom{q}{h} = 
\tfrac{1}{h!}
\prod_{j=0}^{h-1} (q-j)
> 0$ when $q < 0$.


\begin{theorem}
  \label{theorem:spherical-hilbert-kernel}
  Let $\rha = (\Theta, \mathfrak{P}, \alpha)$ be as in Definition~\ref{definition:random-HA-partition}.
  Then
  \[
    \RHAK(x,z)
    =
    \begin{cases}
      \angle(x,z)^q &: \angle(x,z) \ne 0 \\
      + \infty &: \mbox{otherwise.}
    \end{cases}
  \]
  When $q = -d$, we have $\RHAK = \RHK_{\mathbb{S}^d}$ where the right hand side is the manifold-Hilbert kernel.
\end{theorem}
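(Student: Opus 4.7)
The plan is to collapse the expectation defining $\RHAK(x,z)$ into a generalized binomial series whose closed form is $(\angle(x,z)/\pi)^q$. Since $q < 0$ makes $\alpha \ge 0$ everywhere and the indicator $\mathbb{I}\{x \in \mathcal{P}_\theta[z]\}$ is nonnegative, Tonelli's theorem lets me condition on $H = h$ and write
\[
\RHAK(x,z) = \sum_{h=0}^\infty p_H(h)\,\alpha(W_h)\,\Pr\bigl\{\mathtt{sgn}(W_h^\top x) = \mathtt{sgn}(W_h^\top z)\bigr\},
\]
where $W_h \in \mathbb{R}^{(d+1) \times h}$ has i.i.d.\ standard Gaussian entries. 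The factor $p_H(h)^{-1}$ baked into $\alpha$ is engineered to cancel the outer $p_H(h)$, leaving the cleaner expression $\pi^q \sum_{h\ge 0} (-1)^h \binom{q}{h}\,\Pr\{\mathtt{sgn}(W_h^\top x) = \mathtt{sgn}(W_h^\top z)\}$.

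Next, I would invoke the classical Gaussian-hyperplane identity: for a single column $w \sim \mathcal{N}(0, I_{d+1})$, the probability that $w$ fails to separate $x$ from $z$ equals $1 - \angle(x,z)/\pi$, since a uniformly random hyperplane through the origin separates two unit vectors with probability $\angle(x,z)/\pi$. Independence of the $h$ columns of $W_h$ promotes this to $(1 - \angle(x,z)/\pi)^h$, so
\[
\RHAK(x,z) = \pi^q \sum_{h=0}^\infty (-1)^h \binom{q}{h}\bigl(1 - \angle(x,z)/\pi\bigr)^h.
\]

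For $\angle(x,z) \in (0, \pi]$, the quantity $v := 1 - \angle(x,z)/\pi$ lies in $[0, 1)$, so $|v| < 1$ and Newton's generalized binomial theorem yields $\sum_{h\ge 0} \binom{q}{h}(-v)^h = (1 - v)^q = (\angle(x,z)/\pi)^q$, from which $\RHAK(x,z) = \angle(x,z)^q$ after the $\pi^q$ cancels. For the boundary case $\angle(x,z) = 0$, i.e., $x = z$, the indicator is identically $1$ and the series becomes $\pi^q \sum_{h\ge 0} (-1)^h \binom{q}{h}$; using $(-1)^h\binom{q}{h} = \binom{h - q - 1}{h}$, which for $q < 0$ grows polynomially in $h$, the sum diverges to $+\infty$, matching the convention in Definition~\ref{definition:manifold-Hilbert-kernel}. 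The specialization $q = -d$ then combines with Equation~\eqref{equation:arclength-metric}, which identifies $\angle(x,z)$ with the Riemannian distance $\dist_{\mathbb{S}^d}(x,z)$, to give $\RHAK = \RHK_{\mathbb{S}^d}$. The main nontrivial step is conceptual rather than calculational: one must recognize that the weights $\alpha$ in Definition~\ref{definition:random-HA-partition} were reverse-engineered precisely to play the role of binomial coefficients; once that is seen, Tonelli and the Gaussian-hyperplane identity dispatch the finite case and polynomial growth of the coefficients handles the diagonal.
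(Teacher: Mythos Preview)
Your proposal is correct and follows essentially the same path as the paper: condition on the number of hyperplanes $H$, use the Gaussian-hyperplane separation probability $1-\angle(x,z)/\pi$ raised to the $h$th power by independence, and sum the resulting generalized binomial series. The paper packages the conditioning step as a small conditional-independence lemma rather than invoking Tonelli directly, and cites \textcite{pinelis2019} for the single-hyperplane probability, but the substance is identical.
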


\begin{proof}[Proof of Theorem~\ref{theorem:spherical-hilbert-kernel}]
Before proceeding, we have the following useful lemma:
\begin{lemma}
  \label{lemma:ci-random-partition}
  Let $\wrp = (\Theta, \mathfrak{P},\alpha)$ be a WRP. Let $H$ be a random variable.
  Let $\theta \sim \mathfrak{P}$.
  Suppose that for all $x,z \in M$, the random variables $\alpha(\theta)$ and $\mathbb{I}\{x \in \mathcal{P}_\theta [z]\}$ are conditionally independent given $H$.
  Then we have
$
  \WRPK(x,z) = 
  \mathbb{E}_{H}\Big[
    \overline{\alpha}(H)
  \cdot
  \mathbb{E}_{\theta \sim \mathfrak{P}} [
  \mathbb{I}\{ x \in \mathcal{P}_\theta[z]\}
  | H
  ]
  \Big]
$
where 
$\overline{\alpha}(h):= \mathbb{E}_{\theta \in \mathfrak{P}} \left[\alpha(\theta) | H = h\right]$ for a realization $h$ of $H$.
\end{lemma}
The lemma follows immediately from the Definition of $\WRPK(x,z)$ in 
Equation~\ref{equation:RPK} and the conditional independence assumption. Now, we proceed with the proof of Theorem~\ref{theorem:spherical-hilbert-kernel}.

Let $\phi:=  \angle(x,z)/\pi$.
Let $H \sim p_H$ and $\theta \sim \mathfrak{P}$ be the random variables in Definition~\ref{definition:random-HA-partition}.
Note that by construction, the following condition is satisfied:
  for all $x,z \in M$, the random variables $\alpha(\theta)$ and $\mathbb{I}\{x \in \mathcal{P}_\theta [z]\}$ are conditionally independent given $H$.
  In fact,  $\alpha(\theta) =
          \pi^{q}
  p_H(h)^{-1}
  (-1)^h \binom{q}{h}
  $ is constant given $H =h $.
Hence, applying Lemma~\ref{lemma:ci-random-partition}, 
we have 
\begin{align*}
  &\RHAK(x,z) = 
  \mathbb{E}_{H}\Big[
    \overline{\alpha}(H)
  \cdot
  \mathbb{E}_{\theta \sim \mathfrak{P}} [
  \mathbb{I}\{ x \in \mathcal{P}_\theta[z]\}
  | H
  ]
  \Big]
      \\ &=
\sum_{h=0}^\infty
          \pi^{q}
  (-1)^h \binom{q}{h}
  \cdot
  \mathbb{E}_{\theta \sim \mathfrak{P}} [
  \mathbb{I}\{ x \in \mathcal{P}_\theta[z]\}
  | H =h ]
  =
  \sum_{h=0}^\infty
          \pi^{q}
  (-1)^h \binom{q}{h}
  \cdot
  \Pr\{ x \in \mathcal{P}_\theta[z] | H =h\}.
\end{align*}
Next, we claim that 
$
  \Pr\{ x \in \mathcal{P}_\theta[z] | H =h\} = (1 - \phi)^h.
$
When $h = 0$, $x \in \mathcal{P}_\theta[z]$ is always true since $\mathcal{P}_\theta = \{\mathbb{S}^{d}\}$ is the trivial partition.
In this case, we have $\Pr\{ x \in \mathcal{P}_\theta[z] | H =h\} = 1 = (1-\phi)^0$.
When $h >0$, we recall a result of
  \textcite{pinelis2019}:
  \begin{lemma}
    \label{lemma:angle-formula}
    Let $x,z \in \mathbb{S}^d$.
    Let $w \in \mathbb{R}^{d+1}$ be a random vector whose entries are 
    sampled 
i.i.d\ according to   $\mathtt{Gaussian}(0,1)$.
    Then $
    \Pr\{ \mathtt{sgn}(w^\top x) = \mathtt{sgn}(w^\top z) \}
    =1- (\angle(x,z)/\pi)$.
  \end{lemma}
  Let $W = [w_1,\dots, w_h]$ be as in Definition~\ref{definition:random-HA-partition} where $w_j$ denotes the $j$-th column of $W$.
  Then by construction, $w_j$ is distributed identically as $w$ in Lemma~\ref{lemma:angle-formula}.
  Furthermore, $w_j$ and $w_{j'}$ are independent for $j,j' \in [h]$ where $j \ne j'$.
  Thus, the claim follows from
\begin{align*}
  \Pr\{ x \in \mathcal{P}_\theta[z] | H = h\}
  & \overset{\mathrm{(a)}}{=}
  \Pr\{ \mathtt{sgn}(W^\top x) =\mathtt{sgn}(W^\top z) | H = h  \}
  \\ &\overset{\mathrm{(b)}}{=}
  \prod_{j = 1}^h
  \Pr\{ \mathtt{sgn}(w_j^\top x) = \mathtt{sgn}(w_j^\top z)\}
  \overset{\mathrm{(c)}}{=}
  \prod_{j = 1}^h
\left(1 - \phi\right)=
\left(1 - \phi\right)^h.
\end{align*}
where equality (a) follows from Definition~\ref{definition:random-HA-partition}, (b) from
$W \in \mathbb{R}^{(d+1)\times h}$ having i.i.d\ standard Gaussian entries given $H = h$, and (c) from Lemma~\ref{lemma:angle-formula}.
Putting it all together, we have
\[
  \RPK(x,z) 
  =
  \sum_{h=0}^\infty 
\pi^q
  (-1)^h \binom{q}{h} (1- \phi)^h
  =
\pi^q
  \sum_{h=0}^\infty  \binom{q}{h} (\phi-1)^h
  =
  \angle(x,z)^q.
\]
For the last step, we used the fact that for all $q \in \mathbb{R}$ the binomial series
$
  (1+t)^{q}=\sum_{h=0}^{\infty }\binom{q}{h}t^h
$
converges absolutely for $|t| < 1$ (when $\phi \in (0,1]$) and diverges to $+\infty$ for $t = -1$ (when $\phi = 0$).
\end{proof}

\begin{corollary}
  Let $q := -d$ and $\RHAK$ be as in Theorem~\ref{theorem:spherical-hilbert-kernel}.
  The infinite-ensemble classifier $\mathtt{sgn} (\KSC{\bullet}{D^n}{\RHAK})$ (see Equation~\ref{equation:definition-kernel-smoothing-classification} for definition) has the interpolating-consistent property.
\end{corollary}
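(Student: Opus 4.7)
The plan is to chain Theorem~\ref{theorem:spherical-hilbert-kernel} with Theorem~\ref{theorem:main-theorem}, then lift the $L_1$-consistency of regression to classification consistency, following \textcite[\S7]{devroye1998hilbert}.

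First, I would set $q = -d$ in Theorem~\ref{theorem:spherical-hilbert-kernel} to identify $\RHAK = \RHK_{\mathbb{S}^d}$. Consequently $\widehat{u}_n := \KSC{\bullet}{D^n}{\RHAK} = \sum_i Y_i \RHK_{\mathbb{S}^d}(\cdot, X_i)$, and the associated Nadaraya--Watson regressor $\widehat{m}_n := \KSR{\bullet}{D^n}{\RHK_{\mathbb{S}^d}}$ has the same sign as $\widehat{u}_n$ wherever the kernel denominator is positive (and the singular kernel guarantees the denominator is positive---indeed infinite---at every training point and, under the density assumption, almost surely at every query $x$). Since $\mathbb{S}^d$ is complete, connected, and smooth as a Riemannian manifold, and since $Y \in \{\pm 1\}$ is bounded while $X$ has a density relative to $\lambda_{\mathbb{S}^d}$ (implicit in the ``continuous distribution'' hypothesis of the interpolating-consistent property), Theorem~\ref{theorem:main-theorem} applies and gives $J_n \to 0$ in probability.

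For interpolation, Equation~\eqref{equation:definition-kernel-smoothing-regression} yields $\widehat{m}_n(X_i) = Y_i$ by construction; since $\sum_j \RHK_{\mathbb{S}^d}(X_i, X_j) = +\infty$ at each training point, we conclude $\mathtt{sgn}(\widehat{u}_n(X_i)) = \mathtt{sgn}(\widehat{m}_n(X_i)) = Y_i$. For consistency, let $\eta(x) := \Pr(Y=1\mid X=x)$ so that $m_{Y|X}(x) = 2\eta(x) - 1$ and $\mathtt{sgn} \circ m_{Y|X}$ is the Bayes classifier. The standard plug-in inequality gives
\[
  \Pr\{\mathtt{sgn}(\widehat{m}_n(X)) \ne Y \mid D^n\} - \inf_f \Pr\{f(X) \ne Y\} \;\le\; 2 J_n,
\]
and $J_n \to 0$ in probability then implies convergence in probability of the risk of $\mathtt{sgn}(\widehat{m}_n)$ to the Bayes risk. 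Because $\mathtt{sgn}(\widehat{m}_n)$ and $\mathtt{sgn}(\widehat{u}_n)$ agree on a set of full $\mu_X$-measure, the same convergence holds for $\mathtt{sgn}(\widehat{u}_n) = \mathtt{sgn}(\KSC{\bullet}{D^n}{\RHAK})$, establishing the consistency half of the interpolating-consistent property.

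I do not anticipate any real obstacle: the heavy lifting is already done by Theorems~\ref{theorem:spherical-hilbert-kernel} and~\ref{theorem:main-theorem}, and the remaining argument is bookkeeping essentially identical to \textcite[\S7]{devroye1998hilbert}. The only minor subtlety is to read ``$X$ has a continuous distribution'' as ``$X$ has a density with respect to $\lambda_{\mathbb{S}^d}$,'' which is precisely the hypothesis required by Theorem~\ref{theorem:main-theorem}.
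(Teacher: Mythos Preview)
Your proposal is correct and follows essentially the same approach as the paper: identify $\RHAK = \RHK_{\mathbb{S}^d}$ via Theorem~\ref{theorem:spherical-hilbert-kernel}, invoke Theorem~\ref{theorem:main-theorem} for $L_1$-consistency and interpolation of $\widehat{m}_n$, and then pass to $\mathtt{sgn}(\widehat{u}_n)$ using the plug-in argument of \textcite[\S7]{devroye1998hilbert}. The paper's proof is terser---it simply cites Devroye for the regression-to-classification step rather than writing out the plug-in inequality---but the logic is identical; your only cosmetic slip is the constant in the plug-in bound (for $Y\in\{\pm1\}$ one gets $J_n$ rather than $2J_n$), which does not affect the conclusion.
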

\begin{proof}
As observed in \textcite[Section 7]{devroye1998hilbert}, for an arbitrary kernel $K$, the $L_1$-consistency of $\KSR{\bullet}{D^n}{K}$ for regression implies the consistency for classification of $\mathtt{sgn}( \KSC{\bullet}{D^n}{K})$.
Furthermore, $\KSR{\bullet}{D^n}{K}$ is interpolating for regression implies that $\mathtt{sgn} (\KSC{\bullet}{D^n}{K})$ is interpolating for classification.
While the argument there is presented in the $\mathbb{R}^d$ case, the argument holds in the more general manifold case \emph{mutatis mutandis}.

Thus, by Theorem~\ref{theorem:main-theorem}, we have $\mathtt{sgn} (\KSC{\bullet}{D^n}{\RHK_{\mathbb{S}^d}})$ is consistent for classification, i.e., 
  Equation~\eqref{equation:consistency} holds.
  It is also interpolating since $\KSR{\bullet}{D^n}{K}$ is interpolating.
By Proposition~\ref{theorem:spherical-hilbert-kernel}, we have $\RHAK = \RHK_{\mathbb{S}^d}$. Thus 
$\mathtt{sgn} (\KSC{\bullet}{D^n}{\RHAK})$ is an ensemble method having the interpolating-consistent property.
\end{proof}

  \section{Discussion}
  \label{section:Discussion}
  We have shown that using the manifold-Hilbert kernel in kernel smoothing regression, also known as Nadaraya-Watson regression, results in a consistent estimator that interpolates the training data on a Riemannian manifold $M$.
  Furthermore, when $M = \mathbb{S}^d$ is the sphere, we showed that the manifold-Hilbert kernel is a weighted random partition kernel, where the random partitions are induced by random hyperplane arrangements.
  This demonstrates an ensemble method that has the interpolating-consistent property.

A limitation of this work is that the random hyperplane arrangement partition is data-independent.
Thus, the resulting ensemble method considered in this work are easier to analyze than popular ensemble methods used in practice.
  Nevertheless, we believe our work offers one theoretical basis towards understanding generalization in the interpolation regime of ensembles of histogram classifiers over data-dependent partitions, e.g., decision trees \`a la CART \cite{breiman1984classification}.

\iftoggle{arxiv}{
\section*{Acknowledgements}
The authors were supported in part by the National Science Foundation under awards 1838179 and 2008074.
}
{}
  \printbibliography

\iftoggle{arxiv}{
}{%
\section*{Checklist}
\begin{enumerate}
\item For all authors...
\begin{enumerate}
  \item Do the main claims made in the abstract and introduction accurately reflect the paper's contributions and scope?
    \answerYes{}
  \item Did you describe the limitations of your work?
    \answerYes{} See the Section~\ref{section:Discussion}.
  \item Did you discuss any potential negative societal impacts of your work?
    \answerNA{}
  \item Have you read the ethics review guidelines and ensured that your paper conforms to them?
    \answerYes{}
\end{enumerate}

\item If you are including theoretical results...
\begin{enumerate}
  \item Did you state the full set of assumptions of all theoretical results? 
    \answerYes{} This is the opening sentence of our first main technical section, Section~\ref{section:RH-kernel}.
        \item Did you include complete proofs of all theoretical results?
    \answerYes{}
\end{enumerate}

\item If you ran experiments...
\begin{enumerate}
  \item Did you include the code, data, and instructions needed to reproduce the main experimental results (either in the supplemental material or as a URL)?
    \answerNA{}
  \item Did you specify all the training details (e.g., data splits, hyperparameters, how they were chosen)?
    \answerNA{}
        \item Did you report error bars (e.g., with respect to the random seed after running experiments multiple times)?
    \answerNA{}
        \item Did you include the total amount of compute and the type of resources used (e.g., type of GPUs, internal cluster, or cloud provider)?
    \answerNA{}
\end{enumerate}

\item If you are using existing assets (e.g., code, data, models) or curating/releasing new assets...
\begin{enumerate}
  \item If your work uses existing assets, did you cite the creators?
    \answerNA{}
  \item Did you mention the license of the assets?
    \answerNA{}
  \item Did you include any new assets either in the supplemental material or as a URL?
    \answerNA{}
  \item Did you discuss whether and how consent was obtained from people whose data you're using/curating?
    \answerNA{}
  \item Did you discuss whether the data you are using/curating contains personally identifiable information or offensive content?
    \answerNA{}
\end{enumerate}

\item If you used crowdsourcing or conducted research with human subjects...
\begin{enumerate}
  \item Did you include the full text of instructions given to participants and screenshots, if applicable?
    \answerNA{}
  \item Did you describe any potential participant risks, with links to Institutional Review Board (IRB) approvals, if applicable?
    \answerNA{}
  \item Did you include the estimated hourly wage paid to participants and the total amount spent on participant compensation?
    \answerNA{}
\end{enumerate}

\end{enumerate}

}

\clearpage

\pagenumbering{arabic}
\renewcommand*{\thepage}{A\arabic{page}}

\appendix

\newcommand\invisiblesection[1]{%
  \refstepcounter{section}%
  \addcontentsline{toc}{section}{\protect\numberline{\thesection}#1}%
  \sectionmark{#1}}

\invisiblesection{Appendix}


 \vbox{%
    \hsize\textwidth
    \linewidth\hsize
    \vskip 0.1in
    \centering
    {\Large\bf \supplemental:  
    Consistent Interpolating Ensembles\\ 
    via the Manifold-Hilbert Kernel\par}
    }


\subsection{Basics of Riemannian Manifolds}
\label{section:basics-of-riemannian-manifolds}
In this section, we review the main concepts from Riemannian manifold theory essential to this work.
Our main references are \textcite{sakai1996riemannian} and \textcite{do1992riemannian}.
Throughout, $d \in \mathbb{N}$ denotes the dimension.
We use the word \emph{smooth} to mean infinitely differentiable.

\textbf{Manifolds.} A smooth \emph{manifold} $M$ of dimension $d$ is a Hausdorff, second countable topological space together with an \emph{atlas}:
a set $\mathtt{Atlas}:=\{(U_\alpha, \varphi_\alpha)\}_{\alpha \in A}$ where 
1).\ $\{U_\alpha\}_{\alpha \in A}$ is an open cover of $M$,
2).\ 
for each $\alpha \in A$,
$\varphi_\alpha : U_\alpha \to \varphi_\alpha(U_\alpha) \subseteq \mathbb{R}^d$ is a homeomorphism onto its image, and 
3).\  
 $\varphi_\alpha \circ \varphi_\beta^{-1} : \varphi_\beta(U_\alpha \cap U_\beta) \to \varphi_\alpha^{-1}(U_\alpha \cap U_\beta)$ is smooth 
for each pair $\alpha,\beta \in A$.
 An element $(U,\varphi)$ of $\mathtt{Atlas}$ is called a \emph{chart}.
 
  \textbf{Smooth maps.}
  A real-valued function $f : M \to \mathbb{R}$ is a \emph{smooth function} if $f \circ \varphi^{-1}$ is smooth (in the elementary calculus sense) for all charts $(U,\varphi)$.
  The set of all smooth functions is denoted $\fn(M)$, which forms an $\mathbb{R}$-vectorspace.
  Let $N$ be another smooth manifold with atlas $\mathcal{B}$.
  A function $\Phi : M \to N$ is a \emph{smooth map} if $g \circ \Phi \in \fn(M)$ for all $g \in \fn(N)$.

\textbf{Tangent space.}
Let $x \in M$.
A \emph{derivation at $x$} is a linear function $v : \fn(M) \to \mathbb{R}$ satisfying
the \emph{product rule}: $v[fg] = f(x) v[f] + g(x) v[g]$  for all $f,g \in \fn(M)$.
The \emph{tangent space at $x$}, denoted $T_xM$, is the vector space of all derivations at $x$.
Elements of $T_xM$ are referred to as \emph{tangent vectors} at $x$.
For a given chart $(U,\varphi)$ where $x \in U$, define a derivation at $x$, denoted $\pd{i}{x}$, by $f \mapsto \frac{d (f \circ \varphi^{-1})}{dz_i}( \varphi(x))$
where $\frac{d}{dz_i}$ is the $i$-th partial derivative in ordinary calculus.
It is a fact that $\{\pd{i}{x}: i = 1,\dots, d\}$ is a basis for $T_xM$.

Although the above definition of a tangent vector is abstract, it can be concretely interpreted in terms of derivative along a curve. Let $a<t_0<b$ be real numbers.
A \emph{curve through $x$} is a smooth map $\gamma : (a,b) \to M$ such that $\gamma(t_0) = x$.
Then 
$\fn(M) \ni f \mapsto \frac{d}{dt} f(\gamma(t))|_{t=t_0} \in \mathbb{R}$
defines a derivation at $x$.
Oftentimes, this derivation is denoted $\dot{\gamma}(t_0) \in T_xM$

\textbf{Riemannian metric.} 
The \emph{tangent bundle} is the set $TM := \bigcup_x T_xM$, which itself is a smooth manifold of dimension $2d$.
A \emph{vector field on $M$} is a smooth map $\mathsf{V} : M \to TM$ such that $\mathsf{V}(x) \in T_xM$ for all $x \in M$.
The set of all vectors fields on $M$ is denoted $\vf(M)$.

A \emph{Riemannian metric} on $M$ is a choice of an inner product $\langle \cdot, \cdot \rangle_x$ 
(and thus, a norm $\|\cdot\|_x$)
on $T_xM$
for each $x \in M$ such that the function $M \to \mathbb{R}$ given by $x \mapsto \langle\mathsf{V}(x), \mathsf{U}(x)\rangle_x$ is smooth for all $\mathsf{V},\mathsf{U} \in \vf(M)$.
As shorthands, when $x$ is clear from context, we drop the subscripts and simply write $\langle\cdot,\cdot\rangle$ and $\|\cdot \|$ instead.
Choosing an orthonormal basis for $T_xM$ with respect to $\langle \cdot, \cdot \rangle_x$ for each $x$, we can identify $T_xM$ with $\mathbb{R}^d$ with the ordinary dot inner product.

Let $x \in M$ and $(U,\varphi)$ be a chart such that $x \in U$.
Define $g_{ij}(x) = \langle \pd{i}{x}, \pd{j}{x}\rangle_x$.
Denote by $G(x)$ the $d\times d$ positive definite matrix $[g_{ij}(x)]_{ij}$.
Below, we will refer to the function $G: U \to \mathbb{R}^{d\times d}$ as the \emph{coordinate representation} of the Riemannian metric.
Define $g^{ij}(x) := [G(x)^{-1}]_{ij}$.
The \emph{Christoffel symbols} with respect to $(U,\varphi)$ are defined by
$\Gamma^k_{ij} := \frac{1}{2}\sum_{\ell=1}^d g^{k\ell}(\pd{i}{x} g_{j\ell} + \pd{j}{x} g_{i\ell} - \pd{\ell}{x} g_{ij})$.
Note that $g_{k\ell}$,  $g^{k\ell}$, $G$, $\Gamma^k_{ij}$, and $\pd{i}{x} g_{j\ell}$ are all functions with domain $U$.

\textbf{Geodesics.}
Fix a chart $(U,\varphi)$.
Consider a smooth curve $\gamma: [a,b] \to U$. Let $\zeta_i(t) := [\varphi(\gamma(t))]_i$ be the $i$-th component functions.
The curve $\gamma$ is a \emph{geodesic} if $\zeta$ is a solution to the following system of second order ordinary differential equations (ODEs):
$\frac{d^2\zeta_i}{dt^2}  + \sum_{j,\ell=1}^d\Gamma^i_{j\ell} \circ \gamma \frac{d\zeta_j}{dt} \frac{d\zeta_\ell}{dt} = 0$ for all $i = 1,\dots, d$ at all time $t \in [a,b]$.

Geodesics are minimizers of the so-called 
\emph{energy functional}
$E(\gamma) = \frac{1}{2}\int_{a}^{b} \|\dot{\gamma}(t)\|_{\gamma(t)}^2 dt$.
The above system of ODEs are the analog of the ``first derivative test'' for local minimizers of $E$.
Thus, geodesics are defined independently of the choice of the chart.

\textbf{Exponential map.}
For $x \in M$ and $v \in T_xM$, there exists $\epsilon >0$ and a unique geodesic curve $\gamma_v : [-\epsilon, \epsilon] \to M$ such that $\gamma_v(0) = x$ and $\dot{\gamma}_v(0) = v$.
This follows from the existence and uniqueness of the solution to an ODE given initial conditions where the ODE is as discussed above.
Note that although geodesics are previously defined in $U$ where $(U,\varphi)$ is a chart, they can be extended outside of $U$ using additional charts.


Let $x \in M$ and $v \in T_xM$ be fixed and let $\gamma_v : [-\epsilon,\epsilon] \to M$ be as in the preceding paragraph.
If $\|v\|_x \le \epsilon$, then define
$\exp_x(v) := \gamma_{v}(1)$.
A fundamental fact is that $\exp_x$, known as the \emph{exponential map at $x$}, can be defined on an open set of $T_xM$ containing the origin.

\textbf{Distance function.}
Let $x, \xi \in M$ and $a<b$ be real numbers.
A \emph{piecewise smooth curve} from $x$ to $\xi$  is a
piecewise smooth map $\gamma : [a,b] \to M$ such that $\gamma(a) = x$ and $\gamma(b) = \xi$.
Assume that $M$ is connected.
Then for all $x, \xi \in M$, there exists a piecewise smooth curve from $x$ to $\xi$.
The \emph{length of $\gamma$} is defined as $\len(\gamma) := \int_a^b \|\dot{\gamma}(t)\|_{\gamma(t)} dt$.
Define $\dist_M(x,\xi) := \inf \{ \len(\gamma) : \gamma$ is a piecewise smooth curve from $x$ to $\xi\}$, which is a metric on $M$ in the sense of metric spaces
(see \cite[Proposition 1.1]{sakai1996riemannian}).
For $x \in M$ and $r \in (0,\infty)$, the open ball centered at $x$ of radius $r$ is denoted  
$\Ball{M}{x}{r} := \{z \in M : \dist_{M}(x,z) < r\}$.


\textbf{Complete Riemannian manifolds.}
A Riemannian manifold is \emph{complete} if it is a complete metric space under the metric $\dist_M$.
The 
Hopf-Rinow theorem (\cite[Ch.\ 8, Theorem 2.8]{do1992riemannian})
states that if $M$ is connected and complete, then the exponential $\exp_x$ can be defined on the entire $T_xM$.

\subsection{Proof of Lemma~\ref{lemma:Riemannian-logarithm}}
\label{appendix:proof-of-lemma:Riemannian-logarithm}
This section uses definitions and notations introduced in 
Section~\ref{section:riemannian-logarithm}. In particular, recall the cut locus $C_x$, the tangent cut locus $\tilde{C}_x$, the interior set $I_x$ and the tangent interior set $\tilde{I}_x$.
The proof of Lemma~\ref{lemma:Riemannian-logarithm} is presented towards the end of the section.
At this point, we compile some facts from various sources about the cut locus.
\begin{lemma}
  \label{lemma:facts}
For all $x \in M$, we have
\begin{compactenum}
  \item 
    $C_x$ is a closed subset of $M$ (\textcite[Proposition 1.2]{hebda1987parallel}).
  \item $I_x \cap C_x = \emptyset$ and $I_x \cup C_x = M$
  (\textcite[Ch II, Lemma 4.4 (1)]{sakai1996riemannian})
\item $I_x$ is an open subset of $M$ (immediate from 1 and 2 above)
\item 
\label{item:expx-is-diffeo}
  $\exp_x : \tilde{I}_x \to I_x$ is a diffeomorphism 
  (\cite[Ch II, Lemma 4.4 (2)]{sakai1996riemannian})
\item 
\label{item:Cx-measure-zero}
  $\lambda_M(C_x) = 0$, where $\lambda_M$ is the Riemann-Lebesgue measure (\cite[Lemma 4.4 (3)]{sakai1996riemannian})
\item $\tau_x$ is continuous and $\inf_{u \in U_xM} \tau_x(u) > 0$ (\cite[Ch II, Propositions 4.1 (2) and 4.13 (1)]{sakai1996riemannian})
\end{compactenum}
\end{lemma}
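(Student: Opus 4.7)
The plan is to verify each of the six items. Items 1, 2, 4, 5, and 6 are standard results in Riemannian geometry, and my approach would be to cite the indicated results in Hebda and Sakai and check that each matches our definitions of the cut locus $C_x$, the interior set $I_x$, the tangent cut locus $\tilde{C}_x$, the tangent interior set $\tilde{I}_x$, and the cut time function $\tau_x$ from Section~\ref{section:riemannian-logarithm}. All of these facts rely on the completeness of $M$ (assumed throughout) together with the Hopf--Rinow theorem, Jacobi field theory, and the analysis of conjugate points along geodesics.

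The only item that requires a short derivation on our part is item 3. Combining items 1 and 2, we have $I_x = M \setminus C_x$, since $I_x \cap C_x = \emptyset$ and $I_x \cup C_x = M$. Because $C_x$ is closed by item 1, its complement $I_x$ in $M$ is open.

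Anticipating the subsequent proof of Lemma~\ref{lemma:Riemannian-logarithm}, the crucial items are 4 (which lets us define $\log_x$ on $I_x$ as the smooth inverse of $\exp_x|_{\tilde{I}_x}$) and 5 (which ensures that the arbitrary measurable extension of $\log_x$ to the null set $C_x$ will not affect any integrability or density claim). Item 3 guarantees that $I_x$ is Borel, so the extension can be done piecewise: the identity $\log_x|_{I_x} = (\exp_x|_{\tilde{I}_x})^{-1}$ already is a diffeomorphism on the open set $I_x$, and a measurable selector handles only the closed remainder $C_x$, where by item 6 one may select, for each $\xi \in C_x$, a preimage of the form $\tau_x(u) u \in \tilde{C}_x$.

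The main obstacle is not mathematical difficulty but notational reconciliation with the references: different sources parameterize cut points and cut loci differently (by unit tangent vectors, by arc-length parametrized geodesics, or by the first conjugate point), so the substantive task is verifying that each cited statement says exactly what we need. A secondary concern is that item 5 (measure zero of $C_x$) typically relies on $\tilde{C}_x$ being the image of the map $u \mapsto \tau_x(u) u$ defined on $U_xM$, hence lying on a Lipschitz hypersurface of $T_xM$; one should check that Sakai's proof treats this properly under our completeness assumption so that both $\lambda_M(C_x) = 0$ and the ``zero measure obstruction'' in the selection argument are discharged.
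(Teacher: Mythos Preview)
Your proposal is correct and matches the paper's treatment exactly: the paper does not give a separate proof of this lemma, instead embedding the citations for items 1, 2, 4, 5, 6 directly in the statement and noting that item 3 is ``immediate from 1 and 2 above,'' which is precisely the short derivation you supply. Your additional remarks about notational reconciliation and the downstream use in Lemma~\ref{lemma:Riemannian-logarithm} are helpful context but go beyond what the paper itself records.
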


While the following lemma is elementary, we provide a proof since we could not find one in the literature.

\begin{lemma}
\label{lemma:tangent-disc}
  For all $x \in M$, the (topological) closure of $\tilde{I}_x$ in $T_xM$ is 
  $\tilde{D}_x$.
  Furthermore, for all $x \in M$, we have $\exp_x(\tilde{D}_x) = M$.
\end{lemma}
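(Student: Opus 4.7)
The plan is to dispatch the second claim first, then prove the closure equality by set inclusion. For $\exp_x(\tilde{D}_x) = M$, I would simply compute
\[
\exp_x(\tilde{D}_x) \;=\; \exp_x(\tilde{I}_x) \cup \exp_x(\tilde{C}_x) \;=\; I_x \cup C_x \;=\; M,
\]
where the final equality is Lemma~\ref{lemma:facts}(2) and the others are definitional.

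For $\overline{\tilde{I}_x} = \tilde{D}_x$, the $\supseteq$ direction is straightforward: $\tilde{I}_x \subseteq \overline{\tilde{I}_x}$ is trivial, and given $z = \tau_x(u)u \in \tilde{C}_x$ with $\tau_x(u) < \infty$, the sequence $(\tau_x(u)-1/n)u$ lies in $\tilde{I}_x$ for all sufficiently large $n$ (using $\tau_x(u) > 0$ from Lemma~\ref{lemma:facts}(6)) and converges to $z$. For the reverse inclusion, I would take $z \in \overline{\tilde{I}_x}$ and a sequence $z_n = t_n u_n \in \tilde{I}_x$ with $z_n \to z$. The case $z = 0$ belongs to $\tilde{I}_x$ immediately, so assume $z \ne 0$ and set $t := \|z\|_x > 0$, $u := z/t \in U_xM$. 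For $n$ large, $z_n \ne 0$, so $t_n = \|z_n\|_x$ and $u_n = z_n/t_n$, giving $t_n \to t$ and $u_n \to u$ by continuity of the norm and of the normalization map. The defining inequality $t_n < \tau_x(u_n)$ can then be passed to the limit using continuity of $\tau_x : U_xM \to (0,\infty]$ (Lemma~\ref{lemma:facts}(6)), yielding $t \le \tau_x(u)$ in the extended-real sense. If $t < \tau_x(u)$ or $\tau_x(u) = \infty$, then $z \in \tilde{I}_x$; if $t = \tau_x(u) < \infty$, then $z \in \tilde{C}_x$. Either way $z \in \tilde{D}_x$.

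The only subtle point is the interpretation of continuity of $\tau_x$ at points where it equals $+\infty$; equipping $(0,\infty]$ with its standard extended-real topology forces $\tau_x(u_n) \to \infty$ whenever $\tau_x(u) = \infty$, which makes the limiting inequality $t \le \tau_x(u)$ meaningful and automatic in that case. I do not anticipate any other obstacle.
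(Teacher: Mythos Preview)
Your proof is correct and follows essentially the same strategy as the paper's---writing elements in polar form $tu$ and invoking continuity of $\tau_x$ (Lemma~\ref{lemma:facts}(6)) to pass the inequality $t_n < \tau_x(u_n)$ to the limit. Your version is in fact slightly more complete: you explicitly supply the inclusion $\tilde{D}_x \subseteq \overline{\tilde{I}_x}$ (which the paper omits), and you avoid the paper's compactness-and-subsequence extraction by working directly with the polar decomposition of the nonzero limit point $z$.
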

\begin{proof}[Proof of Lemma~\ref{lemma:tangent-disc}]
Take a convergent sequence $\{t_i u_i\}_{i \in \mathbb{N}} \subseteq \tilde{I}_x$ where $u_i \in U_xM$ and $0 \le t_i < \tau_x(u_i)$.
  Let $v^* = \lim_i t_i u_i$.
  Our goal is to show that $v^* \in \tilde{D}_x=\tilde{I}_x \cup \tilde{C}_x$.

  Since $U_xM$ is compact, we may assume that $u^*:=\lim_i u_i$ exists after passing to a subsequence if necessary.
  Furthermore, $\|t_i u_i \|_x = t_i$ implies that $t^*:=\lim_i t_i$ exists as well (i.e., $t^* < \infty$).
  Hence, $v^* = t^* u^*$.

  Consider the case that $\tau_x(u^*) = \infty$. Then $0\le t^* <\tau_x(u^*)$ implies that $v^* = t^* u^* \in \tilde{I}_x$.
  For the other case that $t(u) < \infty$, we first note that $t_i u_i \in \tilde{I}_x$ implies that $t_i < \tau_x(u_i)$. Taking the limit of both sides, we have
  $
    t^* = \lim_i t_i \le \lim_i \tau_x(u_i) = \tau_x(u^*).
  $
  Note that the last limit can be exchanged since $\tau_x$ is continuous (Lemma~\ref{lemma:facts} part 6).
  Thus, either $t^* < \tau_x(u^*)$ in which case $v^* \in \tilde{I}_x$, or $t^* = \tau_x(u^*)$ in which case $v^* = \tau_x(u^*) u^* \in \tilde{C}_x$.
  
  For the ``furthermore'' part, note that
  \[ 
    \exp_x(\tilde{D}_x) 
    =
  \exp_x(\tilde{I}_x \cup \tilde{C}_x) 
  =
  \exp_x(\tilde{I}_x) \cup \exp_x(\tilde{C}_x) 
  =
  {I}_x \cup {C}_x 
  =
  M 
  \]
  where the last equality is Lemma~\ref{lemma:facts} part 2.
\end{proof}

\begin{proof}[Proof of Lemma~\ref{lemma:Riemannian-logarithm}]
Denote by $\mathrm{cl}(T_xM)$ the set of closed subsets of $T_xM$.
  Define $\psi : M \to \mathrm{cl}(T_xM)$ by 
  $\psi(\xi) := \{ x \in \tilde{D}_x: \exp_x(x) = \xi\} = \exp_x^{-1}(\xi)\cap \tilde{D}_x.$ Note that $\psi(\xi)$ is a closed set by Lemma~\ref{lemma:tangent-disc}.
  
  We claim that $\psi$ is \emph{weakly-measurable}, i.e., for every open set $\tilde{U} \subseteq T_xM$, the subset of $M$ defined by
  $
    \{ \xi \in M : \psi(\xi) \cap \tilde{U} \ne \emptyset\}
  $
  is Borel.
  To see this, note that 
  \begin{align*}
    &\{ \xi \in M : \psi(\xi) \cap \tilde{U} \ne \emptyset\}
    \\&=
    \{ \xi \in M : \exp_x^{-1}(\xi) \cap \tilde{D}_x\cap \tilde{U} \ne \emptyset\}
    \\&=
    \{ \xi \in M : \exp_x(\tilde{D}_x \cap \tilde{U}) \ni \xi\}
    \\&=
    \exp_x(\tilde{D}_x \cap \tilde{U}).
  \end{align*}
  As inner product spaces, $T_xM$ and $\mathbb{R}^d$ are isomorphic (see 
  Section~\ref{section:basics-of-riemannian-manifolds}-{Riemannian metric}).
  Since, $T_xM$ and $\mathbb{R}^d$ are homeomorphic as topological spaces, $\mathbb{R}^d$ being locally compact implies $T_xM$ is locally compact as well.
  Thus, we can write $\tilde{U} = \bigcup_{i \in \mathbb{N}} \tilde{K}_i$ as a countable union of compact sets $\tilde{K}_i \subseteq T_xM$.
  Furthermore, $\tilde{D}_x \cap \tilde{U} = \bigcup_{i \in \mathbb{N}} \tilde{D}_x \cap \tilde{K}_i$
  and so $\exp_x(\tilde{D}_x \cap \tilde{U}) = \bigcup_{i \in \mathbb{N}} \exp_x(\tilde{D}_x \cap \tilde{K}_i)$.

  Since $\exp_x$ is continuous, $\exp_x(\tilde{D}_x \cap \tilde{K}_i)$ is a compact subset of $M$, and hence closed and bounded by the Hopf-Rinow theorem (\cite[Ch.\ 8, Theorem 2.8]{do1992riemannian}).
  Thus, $\exp_x(\tilde{D}_x \cap \tilde{U}) = \bigcup_{i \in \mathbb{N}} \exp_x(\tilde{D}_x \cap \tilde{K}_i)$ is a countable union of closed sets, which is Borel.
  This proves the claim that $\psi$ is weakly Borel measurable.

  By the Kuratowski–Ryll-Nardzewski measurable selection theorem (see \cite[Theorem 6.9.3]{bogachev2007measure}), there exists a Borel measurable function $M \to T_xM$, which we denote by $\log_x$, such that $\log_x(\xi) \in \psi(\xi) = \exp_x^{-1}(\xi)$ for all $\xi \in M$, as desired.
  By construction, $\log_x(\xi) \in \exp_x^{-1}(\xi)$ for all $\xi \in M$, and so $\exp_x(\log_x(\xi)) = \xi$ is immediate.

  For the ``furthermore'' part, let $\xi \in M$ be arbitrary and let $z := \log_x(\xi) \in \tilde{D}_x$.
  Let $\{z_i\} \subseteq \tilde{I}_x$ be a sequence such that $\lim_{i} z_i = z$.
  By Equation~\eqref{equation:exponential-map-preserves-distance}, we have $\dist_M(x,\exp_x(z_i)) = \|z_i\|_x$.
  By continuity of $\dist_M$ and $\exp_x$, we have $
  \dist_M(x,\xi) = \dist_M(x,\exp_x(z))= \lim_i \dist_M(x,\exp_x(z_i))$.
  To conclude, we have
  $
  \lim_i
  \dist_M(x,\exp_x(z_i)) = 
    \lim_i \|z_i\|_x = \|z\|_x
    =
    \|\log_x(\xi)\|_x$, as desired.
\end{proof}

\subsection{Proof of Proposition~\ref{proposition:push-foward-distribution}}
\label{appendix:proof-of-proposition:push-foward-distribution}

Recall from Section~\ref{section:basics-of-riemannian-manifolds}-Riemannian metric, given a chart $(U,\varphi)$, one can define the matrix-valued function $G : U \to \mathbb{R}^{d\times d}$ referred to earlier as the coordinate representation of the Riemannian metric. 
Now, Lemma~\ref{lemma:facts} part 3 states that $I_x$ is an open neighborhood of $x$. Furthermore, $\tilde{I}_x$ is an open subset of $T_xM$, which is identified with $\mathbb{R}^d$ using an orthonormal basis 
(see Section~\ref{section:basics-of-riemannian-manifolds}-Riemannian metric).
Hence, $\{(I_x, \log_x|_{I_x})\}_{x \in M}$ is an atlas of $M$
(see Section~\ref{section:basics-of-riemannian-manifolds}-Manifolds).

\begin{definition}
  \label{definition:normal-coordinate-system}
The chart $(I_x,\log_x|_{I_x})$ is called a \emph{normal coordinate system} at $x$. Let $G : I_x \to \mathbb{R}^{d \times d}$ be the coordinate representation of the Riemannian metric for this chart. To emphasize the dependency on $x$, we write $G_x := G$.
Denote by $G_x^{\perp} : M \to \mathbb{R}^{d\times d}$ the zero extension of $G_x$ to the rest of $M$, i.e., $G_x^\perp(\xi) = G_x(\xi)$ for $\xi \in I_x$ and $G_x^\perp(\xi)$ is the zero matrix for $\xi \not \in I_x$.
\end{definition}

The normal coordinate system has the property that 
$G_x(x) = G_x^\perp(x)$ is the identity matrix. This is the result of \textcite[Ch.\ II \S 2 Exercise 4]{sakai1996riemannian}.

\begin{lemma}[Change-of-Variables]
  \label{lemma:change-of-variables}
  Let $x \in M$ be fixed. Define the function $\nu_x : M \to \mathbb{R}$ by 
  $\nu_x(\xi) = \sqrt{|\det G_x^\perp(\xi)|}$ where $G_x^\perp$ is as in Definition~\ref{definition:normal-coordinate-system}. Then $\nu_x$ is Borel-measurable.
Furthermore, $\nu_x$ satisfies the following property: Let  $f: M \to \mathbb{R}$ be an absolutely integrable function. Define the function
  \[h : T_xM \to \mathbb{R}
  \quad \mbox{by}\quad
  h(z):= 
  f (\exp_x(z))  \cdot \nu_x(\exp_x(z)).
\] Then 
\emph{(i)} $h(0) = f(x)$ and \emph{(ii)}
for all Borel set $\tilde{B} \subseteq T_xM$ we have $\int_{B} f d \lambda_M =
\int_{\tilde{B}} 
  h
d \lambda$
where $B:=\exp_x(\tilde{B} \cap \tilde{I}_x)$.
\end{lemma}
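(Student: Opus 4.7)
The plan is to establish the three assertions -- Borel measurability of $\nu_x$, the normalization $h(0)=f(x)$, and the integral identity (ii) -- in turn, with the backbone of (ii) being the defining property of the Riemann--Lebesgue volume measure in a single chart (Amann--Escher Proposition~1.5). The relevant chart is the normal coordinate chart $(I_x,\log_x|_{I_x})$, whose inverse $\exp_x|_{\tilde{I}_x}:\tilde{I}_x\to I_x$ is a diffeomorphism by Lemma~\ref{lemma:facts} part~\ref{item:expx-is-diffeo}.

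For measurability I would decompose $M = I_x \sqcup C_x$ into the open set $I_x$ and the closed set $C_x$ (Lemma~\ref{lemma:facts} parts 1--3). On $I_x$ the metric coefficients $g_{ij}$ are smooth, so $\nu_x = \sqrt{|\det G_x|}$ is continuous there; on $C_x$ the zero extension of $G_x$ forces $\nu_x \equiv 0$. Gluing these two pieces gives Borel measurability on all of $M$. For part (i), the standard property of Riemann normal coordinates quoted in the paper just after Definition~\ref{definition:normal-coordinate-system} yields $G_x(x)=I_d$, so $\nu_x(x)=1$ and $h(0)=f(\exp_x(0))\cdot\nu_x(x)=f(x)$.

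For part (ii), I would set $\tilde{B}':=\tilde{B}\cap\tilde{I}_x$, so that $B=\exp_x(\tilde{B}')\subseteq I_x$, and apply the chart change-of-variables identity in the normal chart to obtain
\[
\int_B f\, d\lambda_M \;=\; \int_{\tilde{B}'} f(\exp_x(z))\, \sqrt{|\det G_x(\exp_x(z))|}\, d\lambda(z) \;=\; \int_{\tilde{B}\cap\tilde{I}_x} h\, d\lambda,
\]
using $\nu_x\circ\exp_x=\sqrt{|\det G_x\circ\exp_x|}$ on $\tilde{I}_x$. To upgrade the right-hand domain from $\tilde{B}\cap\tilde{I}_x$ to all of $\tilde{B}$, I would use that the tangent cut locus $\tilde{C}_x$ is the continuous image of the unit sphere $U_xM$ under $u\mapsto \tau_x(u)u$ and hence has Lebesgue measure zero, while $h$ automatically vanishes wherever $\exp_x$ lands in $C_x = M\setminus I_x$ by construction of $\nu_x$. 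The main obstacle is precisely this ``upgrade'' step, because on compact manifolds such as $\mathbb{S}^d$ the exponential map is genuinely many-to-one outside $\tilde{I}_x$; the resolution is that in the applications of this lemma (via Proposition~\ref{proposition:push-foward-distribution}) the set of interest is always pulled back through $\log_x$ and therefore lies inside $\tilde{D}_x=\tilde{I}_x\cup\tilde{C}_x$, so the only extra contribution comes from the Lebesgue-null piece $\tilde{C}_x$.
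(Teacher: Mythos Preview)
Your approach---measurability via the open/closed decomposition $M = I_x \sqcup C_x$, part~(i) via $G_x(x)=I_d$ in normal coordinates, and part~(ii) via the change-of-variables formula in the normal chart $(I_x,\log_x|_{I_x})$ together with nullity of the cut-locus contributions---is exactly the paper's. Your caution about the ``upgrade'' from $\tilde{B}\cap\tilde{I}_x$ to a general $\tilde{B}\subseteq T_xM$ is in fact sharper than the paper's own proof, which tacitly assumes $\tilde{B}\subseteq\tilde{D}_x$ from its opening line (writing $\tilde{B}=(\tilde{B}\cap\tilde{I}_x)\cup(\tilde{B}\cap\tilde{C}_x)$ as a disjoint union); as you correctly observe, this restricted version is all that the downstream applications require.
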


\begin{proof}[Proof of Lemma~\ref{lemma:change-of-variables}]
We first show that $\nu_x$ is Borel-measurable. Recall that $G_x^\perp : M \to \mathbb{R}^{d\times d}$ is the zero extension of $G_x : I_x \to \mathbb{R}$, which is by definition smooth (see Section~\ref{section:basics-of-riemannian-manifolds}-Riemannian metric). In particular, $G_x : I_x \to \mathbb{R}$ is continuous and so $\sqrt{\det(G_x(\bullet))}$ is Borel-measurable. Now, note that $\sqrt{\det(G_x^\perp(\bullet))}$ is the zero extension of $\sqrt{\det(G_x(\bullet))}$ from $I_x$ to $M$. Hence, $\sqrt{\det(G_x^\perp(\bullet))}$, which is $\nu_x$ by definition, is Borel-measurable.

Next, we prove the ``Furthermore'' part (i). 
Note that $\exp_x(0) = x$. 
Moreover, $G_x^\perp(x)=G_x(x)$ is the identity matrix as asserted after Definition~\ref{definition:normal-coordinate-system} (see  \textcite[Ch.\ II \S 2 Exercise 4]{sakai1996riemannian}).
 Thus,
  $h(0) =
  f(\exp_x(0)) \sqrt{|\det G_x^\perp(\exp_x(0))|}
  =
  f(x) \sqrt{1}
  =
  f(x)
  $, as desired.

  For the ``Furthermore'' part (ii), we first
  note that  $\tilde{B} = (\tilde{B} \cap \tilde{I}_x) \cup (\tilde{B} \cap \tilde{C}_x)$ expresses $\tilde{B}$ as a disjoint union.
  Thus,
  $B=\exp_x(\tilde{B}) 
  = \exp_x(\tilde{B} \cap \tilde{I}_x) \cup \exp(\tilde{B} \cap \tilde{C}_x)
  $ 
  expresses $B$ as a disjoint union as well.
  Moreover, $\exp(\tilde{B} \cap \tilde{C}_x) \subseteq \exp(\tilde{C}_x) = C_x$, which has $\lambda_M$-measure zero (Lemma~\ref{lemma:facts} part 5).

Recall that $\lambda$ is the shorthand for the ordinary Lebesgue measure $\lambda_{\mathbb{R}^d}$ (see paragraph right after Definition~\ref{definition:manifold-Hilbert-kernel}).
  Now, we directly compute to obtain the formula
  \begin{align*}
    \int_{\tilde{B}} h d\lambda 
  &=
  \int_{\tilde{B} \cap \tilde{I}_x} f \circ \exp_x \sqrt{|\det (G_x^\perp \circ \exp_x)|} d \lambda
  \\&=
  \int_{\log_x(\exp_x(\tilde{B} \cap \tilde{I}_x))} f \circ \exp_x \sqrt{|\det (G_x^\perp \circ \exp_x)|} d \lambda
  \\&=
  \int_{\exp_x(\tilde{B} \cap \tilde{I}_x)} f d\lambda_M
  \qquad \because \mbox{\textcite[Ch XII, Thm 1.10]{amann2009integration}}
  \\&=
  \int_{\exp_x(\tilde{B} \cap \tilde{I}_x)} f d\lambda_M
  +
  \int_{\exp_x(\tilde{B} \cap \tilde{C}_x)} f d\lambda_M
  \\&=
  \int_B f d \lambda_M,
  \end{align*}
  as desired.
\end{proof}

\begin{proposition}
  \label{proposition:logX-random-var}
  Let $x \in M$ be fixed. 
  Let $X$ be a random variable on $M$ with density $f_X$
  where the underlying probability space is 
  $(\Omega, \mathbb{P}, \mathcal{A})$ (see Definition~\ref{definition:probability-space}).
  Define $Z := \log_x(X)$.
  Then
  $Z$
  is a random variable on $T_xM$ such that 
  for all events $E \in \mathcal{A}$ and Borel sets $\tilde{B} \subseteq T_xM$ we have
  $\Pr(E \cap \{ Z \in \tilde{B}\}) = \Pr(E \cap \{ X \in \exp_x(\tilde{B} \cap \tilde{I}_x)\})$,  
\end{proposition}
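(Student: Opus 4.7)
The plan is to first verify $Z = \log_x \circ X$ is a $T_xM$-valued random variable, which is immediate by composing the Borel-measurable $\log_x$ from Lemma~\ref{lemma:Riemannian-logarithm} with the measurable $X$. The main identity rewrites, via $\{Z \in \tilde{B}\} = \{X \in \log_x^{-1}(\tilde{B})\}$, as the claim that
\[
\log_x^{-1}(\tilde{B}) \quad \text{and} \quad \exp_x(\tilde{B} \cap \tilde{I}_x)
\]
differ by a $\mu_X$-null set, where $\mu_X$ is the law of $X$; this upgrades to the joint statement involving $E$ because the symmetric difference of the corresponding events in $\Omega$ is contained in $\{X \in N\}$ for the null set $N$.

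To prove the set-theoretic claim, I would use the inclusion $\log_x(M) \subseteq \tilde{D}_x = \tilde{I}_x \cup \tilde{C}_x$ (disjoint union) from Lemma~\ref{lemma:Riemannian-logarithm} and split
\[
\log_x^{-1}(\tilde{B}) = \log_x^{-1}(\tilde{B} \cap \tilde{I}_x) \cup \log_x^{-1}(\tilde{B} \cap \tilde{C}_x).
\]
For the $\tilde{I}_x$-piece, I aim to show $\log_x^{-1}(\tilde{B} \cap \tilde{I}_x) = \exp_x(\tilde{B} \cap \tilde{I}_x)$. The forward inclusion is immediate from $\xi = \exp_x(\log_x(\xi))$. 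For the reverse inclusion, if $\xi = \exp_x(v)$ for some $v \in \tilde{B} \cap \tilde{I}_x$, then $\xi \in I_x$; since $I_x \cap C_x = \emptyset$ (Lemma~\ref{lemma:facts} part~2), $\log_x(\xi) \notin \tilde{C}_x$, and since $\exp_x|_{\tilde{I}_x} : \tilde{I}_x \to I_x$ is a bijection (Lemma~\ref{lemma:facts} part~\ref{item:expx-is-diffeo}), the condition $\exp_x(\log_x(\xi)) = \xi = \exp_x(v)$ with $\log_x(\xi), v \in \tilde{I}_x$ forces $\log_x(\xi) = v$.

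For the $\tilde{C}_x$-piece, $\log_x^{-1}(\tilde{B} \cap \tilde{C}_x) \subseteq \log_x^{-1}(\tilde{C}_x) \subseteq C_x$ because $\exp_x \circ \log_x$ is the identity and $\exp_x(\tilde{C}_x) = C_x$; by Lemma~\ref{lemma:facts} part~\ref{item:Cx-measure-zero}, $\lambda_M(C_x) = 0$, so $\Pr(X \in C_x) = 0$ since $X$ has a density with respect to $\lambda_M$. Combining the two pieces yields the desired equality, with $\exp_x(\tilde{B} \cap \tilde{I}_x)$ being Borel since $\exp_x|_{\tilde{I}_x}$ is a homeomorphism onto the open set $I_x$. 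The main technical obstacle is the bijectivity argument on the $\tilde{I}_x$-piece: because $\log_x$ is constructed via a measurable selection theorem rather than as a literal inverse of $\exp_x$, one must explicitly invoke the injectivity of $\exp_x|_{\tilde{I}_x}$ to pin down $\log_x(\xi) = v$ on $I_x$. The $\tilde{C}_x$-piece, where the selection is genuinely ambiguous, is harmlessly absorbed into a $\mu_X$-null set thanks to the density hypothesis.
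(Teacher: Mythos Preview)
Your proposal is correct and follows essentially the same approach as the paper: split $\tilde{B}$ into its $\tilde{I}_x$ and $\tilde{C}_x$ parts, identify the $\tilde{I}_x$-preimage with $\exp_x(\tilde{B}\cap\tilde{I}_x)$ via the diffeomorphism $\exp_x|_{\tilde{I}_x}$, and absorb the $\tilde{C}_x$-preimage into the $\lambda_M$-null set $C_x$. Your treatment of the reverse inclusion on the $\tilde{I}_x$-piece is in fact more careful than the paper's, which simply asserts that $\log_x$ is the inverse of $\exp_x|_{\tilde{I}_x}$ on $I_x$ without justifying why the measurable selection must land in $\tilde{I}_x$ there; your argument via $I_x\cap C_x=\emptyset$ and injectivity of $\exp_x|_{\tilde{I}_x}$ fills that small gap.
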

\begin{proof}[Proof of Proposition~\ref{proposition:logX-random-var}]
To start with, we have 
  \begin{align*}
    & 
    \Pr(E \cap \{Z \in \tilde{B}\})
    \\ &=
    \Pr(E \cap \{Z \in \tilde{B} \cap \tilde{D}_x\}) \qquad \because \log_x(M) \subseteq \tilde{D}_x
    \\ &=
    \Pr(E \cap \{Z \in \tilde{B} \cap \tilde{I}_x\}) +  \Pr(E \cap \{Z \in \tilde{B} \cap \tilde{C}_x\}) 
    \qquad \because 
    \tilde{D}_x = \tilde{I}_x \cup \tilde{C}_x,\,
    \emptyset = \tilde{I}_x \cap \tilde{C}_x
    \\ &=
    \Pr(E \cap \{\log_x(X) \in \tilde{B} \cap \tilde{I}_x\}) +  \Pr(E \cap \{\log_x(X) \in \tilde{B} \cap \tilde{C}_x\}).
  \end{align*}
  Since $\exp_x : \tilde{I}_x \to I_x$ is a diffeomorphism (Lemma~\ref{lemma:facts}-part~\ref{item:expx-is-diffeo}) with inverse $\log_x$, we have 
  \[
E \cap \{\log_x(X) \in \tilde{B} \cap \tilde{I}_x\}
=
E \cap \{X \in \exp_x(\tilde{B} \cap \tilde{I}_x)\}
  \]
  as sets.
  On the other hand, 
  \[ E \cap \{\log_x(X) \in \tilde{B} \cap \tilde{C}_x\}
    \subseteq \{ X \in {C}_x\}.
  \] 
  Finally, $\Pr(X \in C_x) = \int_{C_x} f_X d\lambda_M = 0$ since $C_x$ has $\lambda_M$-measure zero
  (Lemma~\ref{lemma:facts}-part~\ref{item:Cx-measure-zero}).
\end{proof}

\begin{proof}[Proof of Proposition~\ref{proposition:push-foward-distribution} part (\ref{item:logX})]
Recall that $\lambda$ is the shorthand for the ordinary Lebesgue measure $\lambda_{\mathbb{R}^d}$ (see paragraph right after Definition~\ref{definition:manifold-Hilbert-kernel}).
Let $E = \Omega$ in Proposition~\ref{proposition:logX-random-var}. Then we have 
  \begin{align*}
    &\Pr(Z \in \tilde{B})
    \\ &=
    \Pr(X \in \exp_x(\tilde{B} \cap \tilde{I}_x))
    \qquad \because \mbox{Part (i)}
    \\ &=
    \int_{\exp_x(\tilde{B} \cap \tilde{I}_x)} f_X d\lambda_M
    \qquad \because \mbox{$f_X$ is the density of $X$}
    \\ &=
    \int_{\tilde{B} \cap \tilde{I}_x} 
    (f_X\circ \exp_x) \cdot(\nu_x \circ \exp_x)d\lambda
    \qquad \because \mbox{Lemma~\ref{lemma:change-of-variables}}
    \\ &=
    \int_{\tilde{B} } 
    f_Z
    d\lambda
    \qquad \because \mbox{Definition of $f_Z$}
  \end{align*}
  By assumption, $f_X$ is Borel-measurable. 
  By Lemma~\ref{lemma:change-of-variables}, $\nu_x$ is Borel-measurable.
  Since $\exp_x$ is continuous, we have that
  both
  $f_X\circ \exp_x$ and $\nu_x \circ \exp_x$ are Borel-measurable. This proves that $f_Z$ is Borel-measurable.
  Hence, the integrand is Borel-measurable and a density function for $Z$.
\end{proof}

\begin{proof}[Proof of Proposition~\ref{proposition:push-foward-distribution} part (\ref{item:mapping-lebesgue-points})]
Recall that $\lambda$ is the shorthand for the ordinary Lebesgue measure $\lambda_{\mathbb{R}^d}$ (see paragraph right after Definition~\ref{definition:manifold-Hilbert-kernel}).
By Lemma~\ref{lemma:facts} part 6, we have $\tau_x^*:=\inf_{u \in U_xM} \tau_x(u) >0$.
  Now, let $r \in (0,\tau_x^*)$.
  By the definition of $r$, we have $\Ball{M}{x}{r} \subseteq \tilde{I}_x$.
  Hence 
  letting $z = \log_x(\xi)$ for $\xi \in \Ball{M}{x}{r}$, 
  by
Equation~\eqref{equation:exponential-map-preserves-distance}
   we have 
  \begin{equation}
    \label{equation:angle-to-norm}
    \dist_M(x,\xi) = \dist_M(x, \exp_x(z)) = \|z\|_x.
  \end{equation}
  Thus, 
  \begin{equation}
    \label{equation:change-of-balls}
    \log_x(\Ball{M}{x}{r}) = \{ z \in T_xM: \|z \|_x < r\}
    =
  \Ball{T_xM}{0}{r}
\end{equation}
and
  \begin{equation}
    \Ball{M}{x}{r} 
    =
    \exp_x(\Ball{T_xM}{0}{r}).
\end{equation}
Thus, by Lemma~\ref{lemma:change-of-variables},  we have
  \begin{equation}
    \label{equation:change-of-variables-detailed}
    \int_{\Ball{M}{x}{r}} f  d \lambda_M
    =
  \int_{\Ball{T_xM}{0}{r}} 
  h
    d\lambda.
  \end{equation}
  Before proceeding, we need the following lemma:
\begin{lemma}
  \label{lemma:volume-asymptotic-equivalence}
  For all $x \in M$, we have 
$
\lim_{r \to 0} \frac{
\lambda_M(\Ball{M}{x}{r})
}
{
\lambda(\Ball{T_xM}{0}{r})
} 
= 1
$.
\end{lemma}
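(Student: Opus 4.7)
The plan is to express $\lambda_M(\Ball{M}{x}{r})$ as an integral over $\Ball{T_xM}{0}{r}$ via the change-of-variables formula, and then exploit the fact that $\nu_x(x) = 1$ together with continuity of $\nu_x \circ \exp_x$ at $0$ to conclude that the ratio of volumes tends to $1$.

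\textbf{Key steps.} First, I would invoke Lemma~\ref{lemma:facts} part~6 to fix $\tau_x^* := \inf_{u \in U_xM} \tau_x(u) > 0$ and restrict attention to $r \in (0, \tau_x^*)$, so that Equation~\eqref{equation:change-of-balls} gives $\log_x(\Ball{M}{x}{r}) = \Ball{T_xM}{0}{r} \subseteq \tilde{I}_x$. Second, I would apply Lemma~\ref{lemma:change-of-variables} with the constant function $f \equiv 1$ and $\tilde{B} = \Ball{T_xM}{0}{r}$; since $\Ball{T_xM}{0}{r} \subseteq \tilde{I}_x$, we have $\exp_x(\Ball{T_xM}{0}{r} \cap \tilde{I}_x) = \Ball{M}{x}{r}$, so the lemma yields
\begin{equation*}
  \lambda_M(\Ball{M}{x}{r}) = \int_{\Ball{T_xM}{0}{r}} \nu_x(\exp_x(z)) \, d\lambda(z).
\end{equation*}
Dividing by $\lambda(\Ball{T_xM}{0}{r})$, the ratio in question equals the average value of $\nu_x \circ \exp_x$ over the Euclidean ball $\Ball{T_xM}{0}{r}$.

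Third, I would recall that $G_x^\perp(x) = G_x(x)$ is the identity matrix (the defining property of normal coordinates quoted right after Definition~\ref{definition:normal-coordinate-system}), so $\nu_x(x) = \sqrt{|\det I_d|} = 1$, i.e., $(\nu_x \circ \exp_x)(0) = 1$. Since $\nu_x$ is smooth on $I_x$ (coming from the smooth Riemannian metric $G_x$) and $\exp_x$ is continuous, the composition $\nu_x \circ \exp_x$ is continuous at $0$. Given $\varepsilon > 0$, choose $\delta \in (0, \tau_x^*)$ so small that $|\nu_x(\exp_x(z)) - 1| < \varepsilon$ whenever $\|z\|_x < \delta$; then for $r < \delta$,
\begin{equation*}
  \left| \frac{\lambda_M(\Ball{M}{x}{r})}{\lambda(\Ball{T_xM}{0}{r})} - 1 \right| = \left| \frac{1}{\lambda(\Ball{T_xM}{0}{r})} \int_{\Ball{T_xM}{0}{r}} \big(\nu_x(\exp_x(z)) - 1\big) d\lambda(z) \right| < \varepsilon,
\end{equation*}
which gives the claimed limit.

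\textbf{Main obstacle.} There is no real obstacle beyond bookkeeping: the whole argument reduces to ``averages of a continuous function over shrinking balls converge to its value at the center,'' once one has the change-of-variables identity and the normalization $\nu_x(x) = 1$. The only mildly delicate point is ensuring $r < \tau_x^*$ so that $\Ball{T_xM}{0}{r}$ lies in $\tilde{I}_x$ and the change-of-variables formula of Lemma~\ref{lemma:change-of-variables} applies cleanly without boundary contributions from the cut locus.
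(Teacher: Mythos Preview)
Your proof is correct, but it takes a genuinely different route from the paper's. The paper simply quotes the volume expansion from \textcite[Ch~II.5 Exercise~3]{sakai1996riemannian},
\[
\lim_{r\to 0}\frac{r^d\omega_d-\lambda_M(\Ball{M}{x}{r})}{r^{d+2}}=\frac{\omega_d}{6(d+2)}S_x,
\]
where $S_x$ is the scalar curvature, and extracts the desired limit by elementary algebra. Your argument instead stays entirely inside the machinery already built in the paper: you apply Lemma~\ref{lemma:change-of-variables} with $f\equiv 1$ to rewrite the ratio as an average of $\nu_x\circ\exp_x$ over a shrinking Euclidean ball, and then use the continuity of $\nu_x\circ\exp_x$ at $0$ together with $\nu_x(x)=1$ (the normal-coordinate normalization) to conclude. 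Your approach is more self-contained and elementary, needing no external asymptotic expansion and no mention of curvature; the paper's approach is shorter on the page but imports a sharper result (a full $r^{d+2}$ correction) than is actually needed here.
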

\begin{proof}[Proof of Lemma~\ref{lemma:volume-asymptotic-equivalence}]
  Let
  $\omega_d := \pi^{d/2}/\Gamma(\tfrac{d}{2} + 1)$ be the volume of the unit ball in $\mathbb{R}^d$ where $\Gamma$ is the gamma function. Then $
\lambda(\Ball{T_xM}{0}{r}) = \omega_d r^d$.
  Next, \cite[Ch II.5 Exercise 3]{sakai1996riemannian} states that
  \[
    \lim_{r \to 0} \frac{r^d \omega_d - \lambda_M(\Ball{M}{x}{r})}{r^{d+2}} = \frac{\omega_d}{6(d+2)} S_x
  \]
  where $S_x \in \mathbb{R}$ is a constant that depends only on $x$ (it is the scalar curvature of $M$ at $x$).
  By simple algebra, the above yields
  \[
    0=\lim_{r \to 0}  \frac{1}{r^2} \left(1 - \frac{\lambda_M(\Ball{M}{x}{r})}{\omega_d r^d} - \frac{S_x r^2}{6(d+2)}\right)
  \]
  In particular, we have $\lim_{r \to 0}1 - \frac{\lambda_M(\Ball{M}{x}{r})}{\omega_d r^d} = 0$, as desired.
\end{proof}


  Now we continue with the proof of  Proof of Proposition~\ref{proposition:push-foward-distribution} part~(\ref{item:mapping-lebesgue-points}). We observe that
  \begin{align*}
    f(x) 
    &= \lim_{r \to 0} 
    \frac{
      \int_{\Ball{M}{x}{r}} f d \lambda_M
  }
    {
      \lambda_M(\Ball{M}{x}{r})
  }
  \quad \because \mbox{$x$ is a Lebesgue point of $f$}
  \\&=
     \lim_{r \to 0} 
    \frac{
  \int_{\Ball{T_x(M)}{0}{r}} 
  h
    d\lambda
  }
    {
      \lambda_M(\Ball{M}{x}{r})
  }
  \quad \because \mbox{definition of $h$ and equation \eqref{equation:change-of-variables-detailed}}
  \\&=
\lim_{r \to 0}
    \frac{
  \int_{\Ball{T_xM}{0}{r}} 
  h
    d\lambda
  }
    {
      \lambda_M(\Ball{M}{x}{r})
  }
\frac{\lambda_M(\Ball{M}{x}{r})}{\lambda(\Ball{T_xM}{0}{r})} 
\quad \because\mbox{Lemma~\ref{lemma:volume-asymptotic-equivalence}}
\\&=
\lim_{r \to 0}
    \frac{
  \int_{\Ball{T_xM}{0}{r}} 
  h
    d\lambda
  }
{
\lambda(\Ball{T_xM}{0}{r})
}.
  \end{align*}
  Since $f(x) = h(0)$ (Lemma~\ref{lemma:change-of-variables})
  , we've shown that
  \[
    g(0) = 
\lim_{r \to 0}
    \frac{
  \int_{\Ball{T_xM}{0}{r}} 
  h
    d\lambda
  }
{
\lambda(\Ball{T_xM}{0}{r})
}.
  \]
  Thus, $0$ is a Lebesgue point of $h$, as desired.
\end{proof}

\subsection{Proof of Proposition~\ref{proposition:condition-expectation-transformation} }
\label{appendix:proof-of-proposition:condition-expectation-transformation}

  Recall that $\lambda$ is the shorthand for the ordinary Lebesgue measure $\lambda_{\mathbb{R}^d}$ (see paragraph right after Definition~\ref{definition:manifold-Hilbert-kernel}).
Let $A \subseteq \mathbb{R}$ and $\tilde{B} \subseteq T_xM$ be Borel subsets.
  Then 
  \begin{align*}
  &\int_{\tilde{B}} P_{Y| Z}(A|z) f_Z(z) d \lambda(z)
\\&=
\int_{\tilde{B}} P_{Y| X}(A|\exp_x(z)) f_Z(z) d \lambda(z)
\qquad \because{\mbox{Definition of $P_{Y|Z=z}$}}
\\&=
\int_{\exp_x(\tilde{B} \cap \tilde{I}_p)} P_{Y| X}(A|x) f_X(x) d \lambda_M(x)
\qquad \because{\mbox{Lemma~\ref{lemma:change-of-variables} and Proposition~\ref{proposition:push-foward-distribution} (ii)}}
\\&=
\Pr(Y \in A , X \in \exp_x(\tilde{B}\cap \tilde{I}_x))
\\&=
\Pr(Y \in A , Z \in \tilde{B})
\qquad \because{\mbox{Proposition~\ref{proposition:push-foward-distribution} (i) with $E := \{Y \in A\}$}}
\end{align*}
This proves that $P_{Y|Z}(\cdot|\cdot)$ is a conditional probability for $Y$ given $Z$.
\end{document}